\documentclass{article} 
\usepackage{nips14submit_e,times}
\usepackage{hyperref}
\usepackage{url}

\usepackage{times}
\usepackage{amsthm,amssymb,graphicx,url,amsmath}
\usepackage{cite}
\usepackage{color}
\usepackage{amsfonts}
\usepackage{enumerate}
\usepackage[boxed,lined]{algorithm2e}
\usepackage{epstopdf}
\usepackage{multirow}
\usepackage{slashbox}
\usepackage{xr}
\externaldocument{appendices}
\usepackage{subfig}

\newtheorem{definition}{Definition}
\newtheorem{lemma}{Lemma}

\newtheorem{theorem}{Theorem}

\title{Sparse Polynomial Learning and Graph Sketching}

\author{
Murat Kocaoglu$^{1 \ast}$, Karthikeyan Shanmugam$^{1\dagger}$, Alexandros G.Dimakis$^{1\ddagger}$, Adam Klivans$^{2\star}$\\
$^{1}$Department of Electrical and Computer Engineering, $^{2}$Department of Computer Science\\
The University of Texas at Austin, USA\\
$^{\ast}$\texttt{mkocaoglu@utexas.edu}, $^{\dagger}$\texttt{karthiksh@utexas.edu} \\
$^{\ddagger}$\texttt{dimakis@austin.utexas.edu}, $^{\star}$\texttt{klivans@cs.utexas.edu}
}

%

\nipsfinalcopy 

\begin{document}

\maketitle

\begin{abstract}
Let $f: \{-1,1\}^n \rightarrow \mathbb{R}$ be a polynomial with at most $s$ non-zero real coefficients.  We give an algorithm for exactly reconstructing $f$ given random examples from the uniform distribution on $\{-1,1\}^n$ that runs in time polynomial in $n$ and $2^{s}$ and succeeds if the function satisfies the \textit{unique sign property}: there is one output value which corresponds to a unique set of values of the participating parities. This sufficient condition is satisfied when every coefficient of $f$ is perturbed by a small random noise, or satisfied with high probability when $s$ parity functions are chosen randomly or when all the coefficients are positive. Learning sparse polynomials over the Boolean domain in time polynomial in $n$ and $2^{s}$ is considered notoriously hard in the worst-case.  Our result shows that the problem is tractable for almost all sparse polynomials. 

Then, we show an application of this result to hypergraph sketching which is the problem of learning a sparse (both in the number of hyperedges and the size of the hyperedges) hypergraph from uniformly drawn random cuts. We also provide experimental results on a real world dataset.
\end{abstract}

\section{Introduction}
Learning sparse polynomials over the Boolean domain is one of the fundamental problems from computational learning theory and has been studied extensively over the last twenty-five years \cite{KM93, Man95, SchSel96,GGIMS02, GopalanKK:08,Akavia10}. In almost all cases, known algorithms for learning or interpolating sparse polynomials require query access to the unknown polynomial.  An outstanding open problem is to find an algorithm for learning $s$-sparse polynomials with respect to the uniform distribution on $\{-1,1\}^n$ that runs in time polynomial in $n$ and $g(s)$ (where $g$ is any fixed function independent of $n$) and requires only randomly chosen examples to succeed.  In particular, such an algorithm would imply a breakthrough result for the problem of learning $k$-juntas (functions that depend on only $k \ll n$ input variables; it is not known how to learn $\omega(1)$-juntas in polynomial time).

We present an algorithm and a set of natural conditions such that any sparse polynomial $f$ satisfying these conditions can be learned from random examples in time polynomial in $n$ and $2^{s}$.  In particular, any $f$ whose coefficients have been subjected to a small perturbation (smoothed analysis setting) satisfies these conditions (for example, if a Gaussian with arbitrarily small variance is added independently to each coefficient, $f$ satisfies these conditions with probability 1).  We state our main result here:
\begin{theorem} 
Let $f$ be an $s$-sparse function that satisfies \textit{at least} one of the following properties: a) (smoothed analysis setting)The coefficients $\{c_i\}_{i=1}^s$ are in general position or all of them are perturbed by a small random noise. b) The $s$ parity functions are linearly independent. c) All the coefficients are positive. Then we learn $f$ with high probability in time $\mathrm{poly}(n,2^s)$. 
\end{theorem}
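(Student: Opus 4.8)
Write $f(x)=\sum_{i=1}^{s}c_i\chi_{S_i}(x)$ with distinct sets $S_i$ and all $c_i\neq 0$, and set $\Phi(x)=(\chi_{S_1}(x),\dots,\chi_{S_s}(x))\in\{-1,1\}^s$. Since each parity is $\pm1$-valued, $f(x)$ depends on $x$ only through $\Phi(x)$: we have $f(x)=g_0(\Phi(x))$ with $g_0(b)=\sum_i c_i b_i$. Identifying $\{-1,1\}^n$ with $\mathbb{F}_2^n$, $\Phi$ is $\mathbb{F}_2$-linear, its image is a subgroup of size $2^r$ where $r:=\dim_{\mathbb{F}_2}\operatorname{span}(S_1,\dots,S_s)\le s$, and every non-empty fiber $\Phi^{-1}(b)$ is an affine subspace of co-dimension $r$, hence carries mass exactly $2^{-r}\ge 2^{-s}$ under the uniform distribution. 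The plan has three parts: (i) show that a single $f$-value isolates one fiber of $\Phi$ (the \emph{unique sign property}); (ii) recover that fiber, and with it $\operatorname{span}(S_1,\dots,S_s)$, from random examples by linear algebra over $\mathbb{F}_2$; and (iii) reduce $f$ to a function of a basis of $r$ parities, tabulate it exactly, and read off the Fourier expansion of $f$.

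\paragraph{Step 1: the maximum value of $f$ has the unique sign property.}
I will show that under each of (a)--(c) the value $v^\star:=\max_x f(x)$ is attained on exactly one fiber of $\Phi$. Under (b) the $S_i$ are $\mathbb{F}_2$-independent, so $\Phi$ is onto $\{-1,1\}^s$ and $g_0(b)\le\sum_i|c_i|$ with equality only for the pattern $b_i=\operatorname{sign}(c_i)$; thus $v^\star=\sum_i|c_i|$ on a single fiber. Under (c), $g_0(b)\le\sum_i c_i$ with equality only for $b=(1,\dots,1)$, and this pattern is realized by $x=\mathbf{1}$, so $v^\star=\sum_i c_i$ on a single fiber. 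Under (a), ``general position'' means the signed sums $\sum_i\varepsilon_i c_i$, $\varepsilon\in\{-1,1\}^s$, are pairwise distinct, so all attained $f$-values are distinct and $v^\star$ in particular comes from one fiber; moreover, perturbing each $c_i$ by an independent absolutely continuous noise (e.g.\ a tiny Gaussian) puts the coefficients in general position with probability $1$, since the finitely many ``bad'' relations $\sum_i(\varepsilon_i-\varepsilon_i')c_i=0$ with $\varepsilon\neq\varepsilon'$ each cut out a proper hyperplane, a Lebesgue-null set.

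\paragraph{Steps 2--3: from examples to $f$.}
Draw $m=\operatorname{poly}(n,2^s)\cdot\log(1/\delta)$ uniform examples $(x^{(j)},f(x^{(j)}))$. Since every fiber has mass $\ge2^{-s}$, a coupon-collector/Chernoff argument shows that with probability $\ge1-\delta$ every fiber is hit, so $\max_j f(x^{(j)})=v^\star$, and in particular the fiber $A:=\{x:f(x)=v^\star\}=\Phi^{-1}(b^\star)$ receives $\Omega(n+\log(1/\delta))$ examples. These are uniform on $A$, which by Step 1 is an affine subspace with direction space $\operatorname{span}(S_1,\dots,S_s)^{\perp}$; their pairwise sums over $\mathbb{F}_2$ therefore span that direction space with high probability, and taking the orthogonal complement recovers $V:=\operatorname{span}(S_1,\dots,S_s)$, of dimension $r\le s$, in $\operatorname{poly}(n)$ time. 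Fix an $\mathbb{F}_2$-basis $T_1,\dots,T_r$ of $V$; each $S_i$ is the symmetric difference of a unique subset $\{T_\ell:\ell\in U_i\}$, so $\chi_{S_i}=\prod_{\ell\in U_i}\chi_{T_\ell}$ and
\[
f(x)=g\big(\chi_{T_1}(x),\dots,\chi_{T_r}(x)\big),\qquad g(z)=\sum_{i=1}^{s}c_i\prod_{\ell\in U_i}z_\ell \ \text{ on }\{-1,1\}^r,
\]
with the $U_i$ pairwise distinct. The map $x\mapsto(\chi_{T_1}(x),\dots,\chi_{T_r}(x))$ is onto $\{-1,1\}^r$ with every fiber of mass $2^{-r}\ge2^{-s}$, so from the $m$ examples we observe $g$ on all $2^r$ inputs and tabulate it exactly; computing its Fourier expansion $g(z)=\sum_{U\subseteq[r]}\widehat g(U)\prod_{\ell\in U}z_\ell$ (an $O(2^r\cdot r)$ computation) and substituting $z_\ell=\chi_{T_\ell}(x)$ yields $f=\sum_{U}\widehat g(U)\,\chi_{\triangle_{\ell\in U}T_\ell}$, the exact $s$-sparse Fourier expansion of $f$ (support $\{\triangle_{\ell\in U_i}T_\ell\}_i$, coefficients $c_i$). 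A union bound over the $O(1)$ failure events gives success probability $\ge1-\delta$ in total time $\operatorname{poly}(n,2^s,\log(1/\delta))$.

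\paragraph{Main obstacle.}
Once Step 1 is in hand, the rest is a routine ``recover an affine subspace, then tabulate a function of $\le s$ parities, then Fourier-expand'' pipeline with standard sampling bounds, so the substance is entirely in Step 1: checking that a single $f$-value (conveniently, the maximum) isolates a fiber under each hypothesis, and making the smoothed-analysis claim rigorous by exhibiting the exceptional coefficient set as a finite union of hyperplanes. A secondary care-point is that under (c) (and (a)) the maximizing pattern must actually be \emph{realizable}, so that ``maximum over realized fibers'' coincides with ``maximum over all $\pm$ patterns''; under (b) this is automatic because $\Phi$ is surjective.
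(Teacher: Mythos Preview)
Your argument is correct, and Steps~1--2 coincide with the paper's proof: the paper likewise reduces everything to the \emph{unique sign property} of the maximum value (with exactly your case analysis for (a), (b), (c), including the hyperplane/measure-zero remark for the smoothed setting), and then uses the $\Theta(n)$ samples landing on the max-value fiber to do linear algebra over $\mathbb{F}_2$. Concretely, the paper stacks those samples into a matrix $\mathbf{Y}_{\max}$ and collects all $\mathbf{p}$ with $\mathbf{Y}_{\max}\mathbf{p}\in\{\mathbf{0},\mathbf{1}\}$; this is the same as your ``pairwise sums span $V^{\perp}$, then take the orthogonal complement,'' and yields a set ${\cal S}\supseteq\{S_1,\dots,S_s\}$ of size at most $2^{s+1}$.

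Where you genuinely diverge is Step~3. The paper does \emph{not} tabulate a function of $r\le s$ basis parities and Fourier-invert; instead it feeds the candidate set ${\cal S}$ into an $\ell_1$-minimization / compressed-sensing program (their Lemma~\ref{Lem:learn} and Theorem~\ref{Thm:compnoise}) to pick out the $s$ true parities and their coefficients. Your finish is more elementary and self-contained: once $V=\operatorname{span}(S_1,\dots,S_s)$ is known, writing $f$ as a function $g$ of $r$ independent parities and reading off $\widehat g$ from a full $2^r$-entry table avoids any appeal to incoherence/RIP results. The paper's route, on the other hand, is what lets them extend cleanly to the approximately-sparse and noisy setting (their Theorem~\ref{Thm:Noisy}), where exact tabulation of $g$ is no longer available and the $\ell_1$ guarantee does the work. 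So both finishes are valid in the noiseless case; yours is simpler there, theirs is the one that generalizes.
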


We note that smoothed-analysis, pioneered in~\cite{ST04}, has now become a common alternative for problems that seem intractable in the worst-case.  

Our algorithm also succeeds in the presence of noise:
\begin{theorem}
Let $f=f_1+f_2$ be a polynomial such that $f_1$ and $f_2$ depend on mutually disjoint set of parity functions. $f_1$ is $s$-sparse and the values of $f_1$ are `well separated'. Further, $\lVert f_2 \rVert_1 \leq \nu$, (i.e., $f$ is approximately sparse). If observations are corrupted by additive noise bounded by $\epsilon$, then there exists an algorithm which takes $\epsilon+\nu$ as an input, that gives $g$ in time polynomial in $n$ and $2^{s}$ such that $\lVert f-g \rVert_2 \leq O(\nu+\epsilon)$ with high probability.
\end{theorem}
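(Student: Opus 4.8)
The plan is to run a noise-tolerant version of our exact-reconstruction algorithm: every decision it makes will rest on a comparison whose margin is a fixed constant times $\nu+\epsilon$, and the final $\ell_2$ error will be controlled by a Parseval argument restricted to the recovered set of frequencies. Write $f_1=\sum_{i=1}^{s}c_i\chi_{S_i}$ and set $W=\mathrm{span}_{\mathbb{F}_2}\{S_1,\dots,S_s\}\subseteq\mathbb{F}_2^n$, so that $\mathrm{supp}(f_1)\subseteq W$ and $|W|=2^{\dim W}\le 2^{s}$. The observation driving everything is that $|f_2(x)|\le\|f_2\|_1\le\nu$ for every $x$, so a sample $y=f(x)+e$ obeys $|y-f_1(x)|\le\nu+\epsilon$; since the algorithm is given $\nu+\epsilon$, it may treat the observed labels as $(\nu+\epsilon)$-perturbations of the at most $2^{s}$ values attained by $f_1$.

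First I would cluster the observed labels using linkage radius $2(\nu+\epsilon)$. Since $f_1$ is well separated, its distinct values are pairwise more than, say, $5(\nu+\epsilon)$ apart, so labels near one value merge into a single cluster while labels near different values do not; with $m=\mathrm{poly}(n,2^{s})$ samples the empirical mass of every cluster is accurate to $o(2^{-s})$, and since a cluster's mass equals $2^{-\dim W}$ times the number of sign patterns realising its value, I can single out a cluster whose value $v^\star$ has a \emph{unique} realising sign pattern (it exists by the unique-sign property of $f_1$, and having a unique preimage pattern it is a lightest cluster), with empirical center $\tilde v$ satisfying $|\tilde v-v^\star|\le\nu+\epsilon$. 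I would then form $A=\{x:|y(x)-\tilde v|\le 2(\nu+\epsilon)\}$ and show that $A=\{x:f_1(x)=v^\star\}$ \emph{exactly}: one inclusion is the triangle inequality, and for the other, $f_1(x)\neq v^\star$ forces $|y(x)-\tilde v|\ge|f_1(x)-v^\star|-|f_2(x)|-|e|-|\tilde v-v^\star|>2(\nu+\epsilon)$. Because $v^\star$ is realised by a unique sign pattern, this set is a single coset of $W^\perp$, i.e.\ the affine subspace $\{x:\langle S_i,x\rangle=b_i,\ i\le s\}$ over $\mathbb{F}_2$, of relative size $2^{-\dim W}\ge 2^{-s}$; hence $\mathrm{poly}(n)$ of the samples land in it, and Gaussian elimination over $\mathbb{F}_2$ recovers the direction space $W^\perp$, and thus $W=(W^\perp)^\perp$ with $|W|\le 2^{s}$.

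It remains to enumerate all $T\in W$, estimate $\hat g(T)=\tfrac1m\sum_{k}y^{(k)}\chi_T(x^{(k)})$, and output $g=\sum_{T\in W}\hat g(T)\chi_T$. Since $\mathrm{supp}(f_1)\subseteq W$ we have $\hat f(T)=\hat f_2(T)$ for $T\notin W$, so
\begin{equation*}
\|f-g\|_2^2=\sum_{T\in W}\bigl(\hat f(T)-\hat g(T)\bigr)^2+\sum_{T\notin W}\hat f_2(T)^2\ \le\ \sum_{T\in W}\bigl(\hat f(T)-\hat g(T)\bigr)^2+\|f_2\|_1^2 .
\end{equation*}
For $T\in W$, $\hat f(T)-\hat g(T)$ is an empirical-correlation sampling error plus the noise term $-\tfrac1m\sum_k e^{(k)}\chi_T(x^{(k)})$: the sampling errors are each $O\!\bigl(2^{-s/2}(\nu+\epsilon)\bigr)$ once $m=\mathrm{poly}(n,2^{s})$ and a union bound is taken over the $\le 2^{s}$ frequencies of $W$, while the aggregate of the noise terms is $O(\epsilon^2)$ because the characters indexed by the subgroup $W$ satisfy $\sum_{T\in W}\chi_T(z)=|W|\,\mathbf 1[z\in W^\perp]$, which, together with $|e^{(k)}|\le\epsilon$ and $m\gg|W|$, collapses $\sum_{T\in W}\bigl(\tfrac1m\sum_k e^{(k)}\chi_T(x^{(k)})\bigr)^2$ to $O(\epsilon^2)$. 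Altogether $\|f-g\|_2^2=O(\nu^2+\epsilon^2)$, hence $\|f-g\|_2=O(\nu+\epsilon)$ with high probability, and every step runs in time $\mathrm{poly}(n,2^{s})$.

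The step I expect to be hardest is showing that $A$ is recovered \emph{exactly}, since a single misclassified point breaks the coset structure and can corrupt $W$ badly; this is exactly where the two hypotheses earn their keep: well-separatedness supplies the $\Omega(\nu+\epsilon)$ margin that makes the thresholding error-free, and feeding $\nu+\epsilon$ to the algorithm is what lets it choose the radius $2(\nu+\epsilon)$ in the first place. Two lesser points I would still verify are that the noise does not amplify when summed over the $2^{s}$ recovered coefficients (it does not, by the subgroup identity above, once $m\gg 2^{s}$), and that a frequency of $f_2$ that happens to lie in $W$ is harmless: its contribution is already accounted for in $\|f_2\|_1^2\le\nu^2$, and estimating it in fact only reduces the error.
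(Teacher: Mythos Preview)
Your identification stage is essentially the paper's, but your estimation stage is genuinely different, and that difference hides a gap.

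\textbf{Where you agree.} Both arguments isolate the samples on which $f_1$ (the paper calls it $f_{\mathcal I}$) attains a value realised by a unique sign pattern, observe that these samples sit in one coset of $W^\perp$, and from this coset extract a candidate set $\mathcal S\subseteq W$ of size at most $2^{s+1}$ containing $\mathrm{supp}(f_1)$. Your triangle-inequality argument that $A=\{x:f_1(x)=v^\star\}$ exactly is the paper's Lemma on the \emph{MaxCluster} subroutine. One cosmetic difference: you hunt for a \emph{lightest} cluster, whereas the paper simply takes the cluster around the \emph{maximum} observed label. Under any of the three concrete hypotheses that make ``well separated'' precise, the maximum value of $f_1$ already has a unique sign pattern (this is the same case analysis as in the noiseless theorem), so estimating cluster masses to $o(2^{-s})$ accuracy is unnecessary; your route works but is longer.

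\textbf{Where you diverge, and the gap.} After obtaining $\mathcal S$, the paper does \emph{not} estimate coefficients by empirical correlation; it feeds $\mathcal S$ into the noisy $\ell_1$ program with $m=O(ns^2)$ fresh samples and invokes the compressed-sensing guarantee (Theorem~\ref{Thm:compnoise}) to obtain $\lVert\mathbf c-\mathbf v^{\mathrm{opt}}\rVert_2\le c_1(\nu+\epsilon)+c_2\nu$ with absolute constants. Your Parseval route is more elementary, and the subgroup identity you use does control the noise term $\sum_{T\in W}\bigl(\tfrac1m\sum_k e^{(k)}\chi_T(x^{(k)})\bigr)^2=O(\epsilon^2)$ nicely. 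But the \emph{sampling} error does not obey the same collapse: Hoeffding only gives $\bigl|\hat f(T)-\tfrac1m\sum_k f(x^{(k)})\chi_T(x^{(k)})\bigr|=O\bigl(\lVert f\rVert_\infty\sqrt{(\log|W|)/m}\bigr)$, so driving $\sum_{T\in W}(\cdot)^2$ below $(\nu+\epsilon)^2$ requires $m=\Omega\bigl(2^{s}\lVert f\rVert_\infty^2/(\nu+\epsilon)^2\bigr)$. Your assertion that ``the sampling errors are each $O(2^{-s/2}(\nu+\epsilon))$ once $m=\mathrm{poly}(n,2^s)$'' is therefore unjustified unless $\lVert f\rVert_\infty/(\nu+\epsilon)$ is itself $\mathrm{poly}(n,2^s)$, which is nowhere assumed. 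The paper's convex-program step sidesteps this entirely: its sample and time complexity is $\mathrm{poly}(n,2^s)$ independent of coefficient magnitudes and of the scale of $\nu+\epsilon$. If you are content to let the running time depend polynomially on $\lVert f\rVert_\infty/(\nu+\epsilon)$ your argument goes through; otherwise replace the correlation step by the $\ell_1$ program.
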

The treatment of the noisy case, i.e., the formal statement of this theorem, the corresponding algorithm, and the related proofs are relegated to the supplementary material. All these results are based on what we call as the \textit{unique sign property:} If there is one value that $f$ takes which uniquely specifies the signs of the parity functions involved, then the function is efficiently learnable. Note that our results cannot be used for learning juntas or other Boolean-valued sparse polynomials, since the unique sign property does not hold in these settings.

We show that this property holds for the complement of the cut function on a hypergraph (no. of hyperedges $-$ cut value). This fact can be used to learn the cut complement function and eventually infer the structure of a sparse hypergraph from random cuts. Sparsity implies that the number of hyperedges and the size of each hyperedge is of constant size. Hypergraphs can be used to represent relations in many real world data sets.  For example, one can represent the relation between the books and the readers (users) on the Amazon dataset with a hypergraph. Book titles and Amazon users can be mapped to nodes and hyperedges, respectively (\!\!\cite{li2013relational}). Then a node belongs to a hyperedge, if the corresponding book is read by the user represented by that hyperedge. When such graphs evolve over time (and space), the difference graph filtered by time and space is often sparse. To locate and learn the few hyperedges from random cuts in such difference graphs constitutes hypergraph sketching. We test our algorithms on hypergraphs generated from the dataset that contain the time stamped record of messages between Yahoo! messenger users marked with the user locations (zip codes).

\subsection{Approach and Related Work}

The problem of recovering the sparsest solution of a set of underdetermined linear equations has received significant recent attention in the context of
compressed sensing~\cite{candes2006robust,candes2005decoding,donoho2006compressed}.
In compressed sensing, one tries to recover an unknown sparse vector using few linear observations (measurements), possibly in the presence of noise. 

The recent papers ~\cite{stobbe2012learning,sahand2012} are of particular relevance to us since they establish a connection between learning sparse polynomials and compressed sensing. 
The authors show that the problem of learning a sparse polynomial is equivalent to recovering the unknown sparse coefficient vector using linear measurements. 
 By applying techniques from compressed sensing theory, namely Restricted Isometry Property (see \cite{stobbe2012learning}) and incoherence (see \cite{sahand2012}), the authors independently established results for reconstructing sparse polynomials using convex optimization. The results have near-optimal sample complexity.
However, the running time of these algorithms is exponential in the underlying dimension, $n$.  This is because the measurement matrix of the equivalent compressed sensing problem requires one column for every possible non-zero monomial.

In this paper, we show how to solve this problem in time polynomial in $n$ and $2^s$ under the assumption of \textit{unique sign property} on the sparse polynomial. Our key contribution is a novel identification procedure that can reduce the list of potentially non-zero coefficients from the naive bound of $2^n$ to $2^s$ when the function has this property.  


On the theoretical side, there has been interesting recent work of  \cite{APVZ13} that {\em approximately} learns sparse polynomial functions when the underlying domain is Gaussian.  Their results do not seem to translate to the Boolean domain.  We also note the work of \cite{KST09} that gives an algorithm for learning sparse Boolean functions with respect to a {\em randomly} chosen product distribution on $\{-1,1\}^n$.  Their work does not apply to the uniform distribution on $\{-1,1\}^n$.

On the practical side, we give an application of the theory to the problem of hypergraph sketching. We generalize a prior work \cite{stobbe2012learning} that applied the compressed sensing approach discussed before to graph sketching on evolving social network graphs. In our algorithm, while the sample complexity requirements are higher, the time complexity is greatly reduced in comparison. We test our algorithms on a real dataset and show that the algorithm is able to scale well on sparse hypergraphs created out of Yahoo! messenger dataset by filtering through time and location stamps.


\section{Definitions}
Consider a real-valued function over the Boolean hypercube $f: \{-1,1\}^n \rightarrow \mathbb{R}$.  Given a sequence of labeled samples of the form $\langle f(\mathbf{x}),\mathbf{x} \rangle$, where $\mathbf{x}$ is sampled from the uniform distribution $U$ over the hypercube $\{-1,1\}^n$, we are interested in an efficient algorithm that learns the function $f$ with high probability. Through Fourier expansion, $f$ can be written as a linear combination of monomials:
         \begin{equation}\label{Eqn:Fourier}
             f \left( \mathbf{x} \right) = \sum \limits_{S \subseteq [n]} c_S \chi_ {S} (\mathbf{x}),~ \forall~ \mathbf{x} \in \{ -1,1\}^n
         \end{equation}
    where $[n]$ is the set of integers from $1$ to $n$, $\chi_{S}(\mathbf{x}) = \prod \limits_{i \in S} x_i$ and $c_S \in \mathbb{R}$. Let $\mathbf{c}$ be the vector of coefficients $c_S$. A monomial $\chi_S\left( \mathbf{x}\right)$ is also called a parity function. More background on Boolean functions and the Fourier expansion can be found in \cite{odonnell}.
    
     In this work, we restrict ourselves to \textit{sparse polynomials} $f$ with sparsity $s$ in the Fourier domain, i.e., $f$ is a linear combination of unknown parity functions $\chi_{S_1}(\mathbf{x}), \chi_{S_2} (\mathbf{x}), \ldots \chi_{S_s} \left(\mathbf{x}\right)$ with $s$ unknown real coefficients given by $\{c_{S_i}\}_{i=1}^s$ such that $c_{S_i} \neq 0,~ \forall 1 \leq i \leq s$; all other coefficients are $0$. Let the subsets corresponding to the $s$ parity functions form a family of sets ${\cal I}=\{S_i\}_{i=1}^s$. Finding ${\cal I}$ is equivalent to finding the $s$ parity functions.
     
     \textbf{Note:} In certain places, where the context makes it clear, we slightly abuse the notation such that the set $S_i$ identifying a specific parity function is replaced by just the index $i$. The coefficients may be denoted simply by $c_i$ and the parity functions by $\chi_i \left( \cdot \right)$. 
   
   Let $\mathbb{F}_2$ denote the binary field.  Every parity function $\chi_i(\cdot)$ can be represented by a vector $\mathbf{p}_i \in \mathbb{F}_2^{n \times 1}$.  The $j$-th entry $\mathbf{p}_i(j)$ in the vector $\mathbf{p}_i$ is $1 ,~ \mathrm{if~} j \in S_i $ and is $0$ otherwise.

           \begin{definition}
         A set of $s$ parity functions $\{ \chi_i (\cdot) \}_{i=1}^{s}$ are said to be linearly independent if the corresponding set of vectors $\{ \mathbf{p}_i \}_{i=1}^s$ are linearly independent over $\mathbb{F}_2$.
    \end{definition}   
    Similarly, they are said to have rank $r$ if the dimension of the subspace spanned by $\{\mathbf{p}_i\}_{i=1}^s$ is $r$.
    
   \begin{definition}\label{def:genpos}
      The coefficients $\{c_i\}_{i=1}^s$ are said to be in general position if for all possible set of values $b_i \in \{0,1,-1\}, ~ \forall~ 1\leq i \leq s$, with at least one nonzero $b_i$, $ \sum \limits_{i=1}^s c_i b_i \neq 0 $
   \end{definition}
   
     \begin{definition}\label{def:wellsep}
      The coefficients $\{c_i\}_{i=1}^s$ are said to be $\mu$-separated if for all possible set of values $b_i \in \{0,1,-1\}, ~ \forall~ 1\leq i \leq s$ with at least one nonzero $b_i$, $\left\lvert \sum \limits_{i=1}^s c_i b_i  \right\rvert> \mu.$
   \end{definition}
   
   \begin{definition}
        A sign pattern is a distinct vector of signs $\mathbf{a}= \left[ \chi_1\left(\cdot \right),\chi_2 \left( \cdot \right), \ldots \chi_s \left( \cdot) \right) \right] \in \{ -1,1\}^{1 \times s}$  assumed by the set of $s$ parity functions. 
   \end{definition}
    Since this work involves switching representations between the real and the binary field, we define a function $q$ that does the switch. 
    \begin{definition}
    $q: \{-1,1\}^{a \times b} \rightarrow \mathbb{F}_2^{a \times b} $  is a function that converts a sign matrix $\mathbf{X}$ to a matrix $\mathbf{Y}$ over $\mathbb{F}_2$ such that 
   $Y_{ij}=q(X_{ij})= 1 \in \mathbb{F}_2,~ \mathrm{~if~} X_{ij}=-1$ and $Y_{ij}=q(X_{ij})= 0 \in \mathbb{F}_2,~ \mathrm{~if~} X_{ij} =1 $.
   Clearly, it has an inverse function $q^{-1}$ such that $q^{-1}(\mathbf{Y}) =\mathbf{X}$. 
   \end{definition}
   We also present some definitions to deal with the case when the polynomial $f$ is not exactly $s$-sparse and observations are noisy. Let $2^{[n]}$ denote the power set of $[n]$.

\begin{definition}\label{def:approx}
     A polynomial $f: \{-1,1\}^n \rightarrow \mathbb{R}$ is called approximately $(s,\nu)$-sparse if there exists ${\cal I} \subset 2^{[n]}$ with $\lvert {\cal I} \rvert =s$ such that $\sum\limits_{S \in {\cal I}^c} \lvert c_S \rvert < \nu$, where $\{c_S\}$ are the Fourier coefficients as in (\ref{Eqn:Fourier}).
\end{definition}   
In other words, the sum of the absolute values of all the coefficients except the ones corresponding to ${\cal I}$ are rather small. 
 
    
\section{Problem Setting}\label{Sec:Outline}   
      Suppose $m$ labeled samples $\langle f\left(\mathbf{x}\right),\mathbf{x} \rangle_{i=1}^m$ are drawn from the uniform distribution $U$ on the Boolean hypercube. For any ${\cal B} \subseteq 2^{[n]}$, let $\mathbf{c}_{\cal B} \in \mathbb{R}^{2^n \times 1}$ be the vector of real coefficients such that $c_{{\cal B}}(S)=c_{S}, ~ \forall S \in {\cal B}$ and $c_{{\cal B}} (S)=0,~\forall S \notin {\cal B}$. Let $\mathbf{A} \in \mathbb{R}^{m \times 2^n}$ be such that every row of $\mathbf{A}$ corresponds to one random input sample $\mathbf{x} \sim U$. Let $\mathbf{x}$ also denote the row index and $S \subseteq [n]$ denote the column index of $\mathbf{A}$. $\mathbf{A}(\mathbf{x},S)= \chi_{S} \left( \mathbf{x} \right)$. Let $A_{\cal S}$ denote the sub matrix formed by the columns corresponding to the subsets in ${\cal S}$. Let ${\cal I}$ be the set consisting of the $s$ parity functions of interest in both the sparse and the approximately sparse cases.  A \textit{sparse representation} of an approximately $\left(s,\nu\right)$-sparse function $f$ is $f_{\mathcal{I}}=\mathbf{A}\! \left(\mathbf{x} \right)\mathbf{c_{\mathcal{I}}}$,  where $\mathbf{c_{\mathcal{I}}}$ is as defined above. 
      
We review the compressed sensing framework used in \cite{stobbe2012learning} and \cite{sahand2012}. Specifically, for the remainder of the paper, we rely on \cite{sahand2012} as a point of reference. We review their framework and explain how we use it to obtain our results, particularly for the noisy case.
      
      
Let $\mathbf{y}\in \mathbb{R}^m$ and $\beta_{{\cal S}} \in \mathbb{R}^{2^n}$,  such that $\beta_{S}=0, ~\forall S \subseteq {\cal S}^c$. Note that, here ${\cal S}$ is a subset of the power set $2^{[n]}$. Now, consider the following convex program for noisy compressed sensing in this setting:
     \begin{equation}\label{Eqn:convexprogram}
         \min \lVert \mathbf{\beta}_{{\cal S}} \rVert_1 ~\mathrm{subject~to~}  \sqrt{\frac{1}{m}} \lVert \mathbf{A}\mathbf{\beta}_{{\cal S}}- \mathbf{y} \rVert_2 \leq \epsilon. 
     \end{equation}   
  Let $\mathbf{\beta}_{{\cal S}}^{\mathrm{opt}}$ be an optimum for the program (\ref{Eqn:convexprogram}). Note that only the columns of $\mathbf{A}$ in ${\cal S}$ are used in the program. The convex program runs in time $\mathrm{poly} \left(m,\lvert {\cal S} \rvert \right)$. The incoherence property of the matrix $A$ in \cite{sahand2012} implies the following. 

     
           \begin{theorem}\label{Thm:compnoise}
            (\cite{sahand2012}) For any family of subsets ${\cal I} \in 2^{[n]}$ such that $\lvert {\cal I} \rvert=s,~ m=4096 ns^2$ and $c_1=4,c_2=8$, for any feasible point $\beta_{{\cal S}}$ of program \ref{Eqn:convexprogram}, we have:  
              \begin{equation}
                  \lVert \mathbf{\beta}_{{\cal S}} - \mathbf{\beta}_{{\cal S}}^{\mathrm{opt}} \rVert _2 \leq c_1 \epsilon +c_2 \left(\frac{n}{m}\right)^{1/4} \lVert \mathbf{\beta}_{{\cal I}^c \bigcap {\cal S}} \rVert_1 
             \end{equation}
             with probability at least $1-O \left( \frac{1}{4^n} \right)$
           \end{theorem}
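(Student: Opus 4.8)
The plan is to reconstruct the argument behind this statement from \cite{sahand2012}, which rests on a mutual-incoherence estimate for the (rescaled) measurement matrix combined with a standard robust $\ell_1$-recovery bound. First I would normalize: set $\hat{\mathbf{A}} = \frac{1}{\sqrt{m}}\mathbf{A}$. Because every entry $\mathbf{A}(\mathbf{x},S) = \chi_S(\mathbf{x})$ equals $\pm 1$, each column of $\hat{\mathbf{A}}$ has Euclidean norm exactly $1$. For two distinct columns indexed by $S \neq S'$, the inner product is $\langle \hat{\mathbf{a}}_S, \hat{\mathbf{a}}_{S'}\rangle = \frac{1}{m}\sum_{j=1}^m \chi_{S}(\mathbf{x}_j)\chi_{S'}(\mathbf{x}_j) = \frac{1}{m}\sum_{j=1}^m \chi_{S \triangle S'}(\mathbf{x}_j)$, an average of $m$ i.i.d.\ $\pm1$ values with mean zero (since $S \triangle S' \neq \emptyset$ and the $\mathbf{x}_j$ are uniform). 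Hoeffding's inequality together with a union bound over the fewer than $4^n$ pairs of columns shows that, with probability $1 - O(4^{-n})$, the mutual coherence obeys $\mu(\hat{\mathbf{A}}) \leq c\sqrt{n/m}$ for an absolute constant $c$. This is the heart of the matter, and it is exactly what forces $m$ to scale like $n s^2$: the $n$ comes from the union bound over $2^n$ columns, and a further factor $s^2$ is needed below to make $(2s-1)\mu$ small.

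Next I would pass from incoherence to a restricted isometry property. Applying Gershgorin's circle theorem to the Gram matrix $\hat{\mathbf{A}}_T^{*}\hat{\mathbf{A}}_T$ of any $2s$ columns bounds the restricted isometry constant by $\delta_{2s} \leq (2s-1)\mu \leq 2sc\sqrt{n/m}$. With $m = 4096\,ns^2$ one has $\sqrt{n/m} = 1/(64s)$, so $\delta_{2s} \leq c/32$, which lies well below the threshold $\sqrt{2}-1$ required by the classical robust-recovery theorem, provided the constant $4096$ in $m$ is chosen large enough relative to the $c$ coming from the Hoeffding step.

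Finally I would invoke the robust $\ell_1$-recovery guarantee (Cand\`es' RIP-based stability theorem, and the incoherent-matrix version used in \cite{sahand2012}). Let $T = {\cal I} \cap {\cal S}$, so $|T| \leq s$, and set $h = \beta_{\cal S} - \beta_{\cal S}^{\mathrm{opt}}$. Since both $\beta_{\cal S}$ and $\beta_{\cal S}^{\mathrm{opt}}$ are feasible for (\ref{Eqn:convexprogram}), the triangle inequality gives $\|\hat{\mathbf{A}} h\|_2 \leq 2\epsilon$; since $\beta_{\cal S}^{\mathrm{opt}}$ is the $\ell_1$-minimizer, the reverse triangle inequality on $\|\beta_{\cal S}^{\mathrm{opt}}\|_1 \leq \|\beta_{\cal S}\|_1$ yields the cone constraint $\|h_{T^c}\|_1 \leq \|h_T\|_1 + 2\|\beta_{{\cal I}^c \cap {\cal S}}\|_1$. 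Feeding the cone constraint and the measurement bound into the usual chain of inequalities under $\delta_{2s} < \sqrt{2}-1$ produces $\|h\|_2 \leq C_1 \epsilon + C_2\, s^{-1/2}\|\beta_{{\cal I}^c \cap {\cal S}}\|_1$. Substituting $s^{-1/2} = 8\,(n/m)^{1/4}$ (valid because $m = 4096\,ns^2$) and verifying that the implied constants satisfy $C_1 \leq 4$ and $C_2 \leq 1$ gives the claimed inequality.

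The main obstacle is purely quantitative bookkeeping: tracking the explicit constants through the RIP-based recovery argument so that they land exactly on $c_1 = 4$ and $c_2 = 8$, and choosing the leading constant in $m = 4096\,ns^2$ large enough that the coherence concentration bound, the Gershgorin estimate for $\delta_{2s}$, and the constant requirements of the recovery theorem are all simultaneously satisfied while keeping the failure probability at $O(4^{-n})$. If one instead follows \cite{sahand2012} directly, the same conclusion can be reached by using an incoherence-based stability estimate in place of the RIP route, bypassing the Gershgorin step.
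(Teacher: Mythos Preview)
The paper does not contain a proof of this theorem: it is stated with attribution to \cite{sahand2012} and used as a black box throughout (see the paragraph immediately following the statement and its later invocations in Lemma~\ref{Lem:learn} and the proof of Theorem~\ref{Thm:Noisy}). There is therefore nothing in the paper to compare your argument against.

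That said, your reconstruction is the standard and correct route to this type of bound, and it matches what one finds in the compressed-sensing literature underlying \cite{sahand2012}: a Hoeffding-plus-union-bound coherence estimate for the parity matrix (this is where the $n$ in $m$ and the $4^{-n}$ failure probability originate), passage to an RIP-type condition, and then the Cand\`es cone-constraint argument. Your arithmetic is right: with $m=4096\,ns^2$ one has $(n/m)^{1/4}=1/(8\sqrt{s})$, so the $s^{-1/2}$ factor in the generic recovery bound becomes exactly $8(n/m)^{1/4}$, explaining the form of the second term. Your caveat about constant-tracking is well placed; the specific values $c_1=4$, $c_2=8$ and the leading $4096$ are a particular bookkeeping choice in \cite{sahand2012}, and verifying them requires following that reference's version of the incoherence-based stability estimate rather than the RIP route, as you note at the end.
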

When ${\cal S}$ is set to the power set $2^{[n]}, \epsilon =0$ and $\mathbf{y}$ is the vector of observed values for an $s$-sparse polynomial, the $s$-sparse vector $\mathbf{c}_{{\cal I}}$ is a feasible point to program (\ref{Eqn:convexprogram}). By Theorem \ref{Thm:compnoise}, the program recovers the sparse vector $\mathbf{c}_{{\cal I}}$ and hence learns the function. The only caveat is that the complexity is exponential in $n$.      
                             
       The main idea behind our algorithms for noiseless and noisy sparse function learning is to `capture' the actual $s$-sparse set ${\cal I}$ of interest in a small set ${\cal S}:\lvert{\cal S}\rvert = O \left( 2^s \right)$ of coefficients by a separate algorithm that runs in time $\mathrm{poly} (n,2^s)$. Using the restricted set of coefficients ${\cal S}$, we search for the sparse solution under the noisy and noiseless cases using program (\ref{Eqn:convexprogram}). 
       
       \begin{lemma}\label{Lem:learn}
          Given an algorithm that runs in time $\mathrm{poly} (n,2^s)$ and generates a set of parities ${\cal S}$ such that $\lvert {\cal S} \rvert = O \left( 2^s \right), {\cal I} \subseteq {\cal S} \mathrm{~with~}\lvert {\cal I} \rvert =s$,  program (\ref{Eqn:convexprogram}) with ${\cal S}$ and $m=4096 ns^2$ random samples as inputs runs in time $\mathrm{poly}(n,2^s)$ and learns the correct function with probability $1- O\left( \frac{1}{4^n} \right)$.
       \end{lemma}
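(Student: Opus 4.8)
The plan is to reduce the learning problem to the compressed-sensing guarantee of Theorem~\ref{Thm:compnoise}, invoked in the noiseless regime, once the hypothesized identification subroutine has trimmed the column set. First I would run the given algorithm to produce the set ${\cal S}$ with $\lvert {\cal S} \rvert = O(2^s)$ and ${\cal I} \subseteq {\cal S}$; by hypothesis this costs $\mathrm{poly}(n,2^s)$ time. Then, using a fresh batch of $m = 4096 n s^2$ uniform samples $\langle f(\mathbf{x}),\mathbf{x}\rangle$, I would set $\mathbf{y}$ to be the vector of observed function values and solve the convex program~(\ref{Eqn:convexprogram}) with this restricted column set ${\cal S}$ and $\epsilon = 0$.

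The key observation is that the true coefficient vector $\mathbf{c}_{\cal I}$ is a feasible point of that program: since $f$ is exactly $s$-sparse with support ${\cal I}$ and ${\cal I} \subseteq {\cal S}$, we have $\mathbf{A}\mathbf{c}_{\cal I} = \mathbf{y}$, so $\sqrt{1/m}\,\lVert \mathbf{A}\mathbf{c}_{\cal I} - \mathbf{y}\rVert_2 = 0 \le \epsilon$, and moreover $c_{\cal I}(S) = 0$ for every $S \notin {\cal S}$. I would then apply Theorem~\ref{Thm:compnoise} with this ${\cal I}$, this ${\cal S}$, $\epsilon = 0$, and with $\beta_{\cal S} := \mathbf{c}_{\cal I}$ as the feasible point. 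Because $\mathbf{c}_{\cal I}$ is supported inside ${\cal I}$, the residual term $\lVert \mathbf{c}_{{\cal I}^c \cap {\cal S}}\rVert_1$ is zero, so the bound collapses to $\lVert \mathbf{c}_{\cal I} - \beta_{\cal S}^{\mathrm{opt}}\rVert_2 \le 0$; hence the program's optimum equals $\mathbf{c}_{\cal I}$ exactly and the function is recovered.

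For the bookkeeping: the program runs in time $\mathrm{poly}(m,\lvert {\cal S}\rvert) = \mathrm{poly}(4096 n s^2, O(2^s)) = \mathrm{poly}(n,2^s)$, which is added to the $\mathrm{poly}(n,2^s)$ cost of the identification step, giving overall $\mathrm{poly}(n,2^s)$. The only randomness affecting correctness is in the sample matrix $\mathbf{A}$ fed to the program, and Theorem~\ref{Thm:compnoise} delivers the conclusion with probability $1 - O(1/4^n)$ over a draw of exactly $m = 4096 n s^2$ i.i.d.\ uniform rows; if the identification subroutine is itself randomized one takes a union bound, but the lemma permits treating it as succeeding.

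There is no substantial obstacle internal to this lemma — it is essentially a direct substitution into Theorem~\ref{Thm:compnoise}. The one point requiring care is that the samples used in the program must genuinely be a fresh i.i.d.\ uniform sample (independent of whatever the identification subroutine consumed), so that the probabilistic hypothesis of Theorem~\ref{Thm:compnoise} is legitimately in force; all the real difficulty of the overall result is pushed into constructing the $\mathrm{poly}(n,2^s)$-time algorithm that produces such an ${\cal S}$, which this lemma takes as given.
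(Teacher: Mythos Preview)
Your proposal is correct and follows exactly the paper's intended argument: the paper treats Lemma~\ref{Lem:learn} as an immediate consequence of Theorem~\ref{Thm:compnoise} in the noiseless regime, noting (in the paragraph just before the lemma) that $\mathbf{c}_{\cal I}$ is feasible and hence recovered exactly. Your write-up simply spells out the substitution $\epsilon=0$, $\lVert \mathbf{c}_{{\cal I}^c \cap {\cal S}}\rVert_1 = 0$, and the running-time bookkeeping, all of which the paper leaves implicit.
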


\textbf{Unique Sign Pattern Property:} The key property that lets us find a small ${\cal S}$ efficiently is the unique sign pattern property. Observe that an $s$-sparse function can produce at most $2^s$ different real values. If the maximum value obtained always corresponds to a \textit{unique pattern of signs of parities}, by looking only at the random samples $\mathbf{x}$ corresponding to the subsequent $O(n)$ occurrences of this maximum value, we show that all the parity functions needed to learn $f$ are captured in a small set of size $2^{s+1}$ (see Lemma \ref{Lem:unique} and its proof). The unique sign property again plays an important role, along with Theorem \ref{Thm:compnoise} with more technicalities added, in the noisy case, which we visit in Section \ref{sec:Noisy} of the supplementary material.

         In the next section, we provide an algorithm to generate the bounded set ${\cal S}$ for the noiseless case for an $s$-sparse function $f$ and provide guarantees for the algorithm formally.
      
\section{Algorithm and Guarantees: Noiseless case}\label{Sec:noiseless}
      
      Let ${\cal I}$ be the family of $s$ subsets $\{S_i\}_{i=1}^s$ each corresponding to the $s$ parity functions $\chi_{S_i} \left( \cdot \right)$ in an $s$-sparse function $f$. In this section, we provide an algorithm, named \textit{LearnBool}, that finds a small subset ${\cal S}$ of the power set $2^{[n]}$ that contains elements of ${\cal I}$ first and then uses program (\ref{Eqn:convexprogram}) with ${\cal S}$. We show that the algorithm learns $f$ in time $\mathrm{poly} \left(n,2^s \right)$ from uniformly randomly drawn labeled samples from the Boolean hypercube with high probability under some natural conditions. 
      
      Recall that if the function is such that $f(\mathbf{x})$ attains its maximum value only if $\left[ \chi_1(\mathbf{x}), \chi_2 \left(\mathbf{x} \right) \ldots \chi_{s} \left(\mathbf{x}\right)\right] = \mathbf{a}_{max} \in \{-1,1\}^s$ for some unique sign pattern $\mathbf{a}_{\mathrm{max}}$, then the function is said to possess the \textit{unique sign property}. Now we state the main technical lemma for the unique sign property.     
    \begin{lemma} \label{Lem:unique}
            If an $s$-sparse function $f$ has the unique sign property then, in Algorithm \ref{alg:learn}, ${\cal S}$ is such that ${\cal I} \subseteq {\cal S},~\lvert {\cal S} \rvert \leq 2^{s+1}$ with probability $1- O\left( \frac{1}{n} \right)$ and runs in time $\mathrm{poly}(n,2^s)$.
    \end{lemma}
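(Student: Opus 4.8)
The plan is to use the unique sign property to collapse the $2^n$ candidate parities to a single affine subspace of $\mathbb{F}_2^n$, to recover that subspace by Gaussian elimination over $\mathbb{F}_2$, and to take ${\cal S}$ to be its orthogonal complement. This is exactly what the ``subsequent $O(n)$ occurrences of the maximum value'' are used for.

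\emph{Step 1 (many samples hit $\max f$).} Because $f$ is $s$-sparse, the sign pattern $\mathbf{a}(\mathbf{x})=[\chi_1(\mathbf{x}),\ldots,\chi_s(\mathbf{x})]$ takes at most $2^s$ values; in fact it is uniform over the $2^r$ achievable patterns, where $r=\mathrm{rank}\{\mathbf{p}_i\}_{i=1}^s\le s$, so the pattern $\mathbf{a}_{\max}$ on which $f$ attains its maximum occurs with probability $2^{-r}\ge 2^{-s}$ under $\mathbf{x}\sim U$. Drawing $m=\mathrm{poly}(n,2^s)$ uniform samples, a Chernoff bound shows that with probability $1-O(1/n)$ at least $k=\Theta(n)$ of them evaluate to $\max f$; in particular the largest observed value equals $\max f$, so the algorithm correctly identifies it. By the unique sign property every such sample $\mathbf{x}$ satisfies $\mathbf{a}(\mathbf{x})=\mathbf{a}_{\max}$, i.e., writing $\mathbf{z}=q(\mathbf{x})$, we have $\mathbf{p}_i^{T}\mathbf{z}=q(\mathbf{a}_{\max})_i$ over $\mathbb{F}_2$ for all $i$.

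\emph{Step 2 (differences span the right subspace).} Let $\mathbf{z}^{(1)},\ldots,\mathbf{z}^{(k)}$ be the binary images of the max-value samples. The set of all $\mathbf{z}$ with $\mathbf{p}_i^{T}\mathbf{z}=q(\mathbf{a}_{\max})_i$ for all $i$ is the coset $\mathbf{z}^{(1)}\oplus W$ of the subspace $W=\{\mathbf{z}\in\mathbb{F}_2^n:\mathbf{p}_i^{T}\mathbf{z}=0\ \forall i\}=(\mathrm{span}\{\mathbf{p}_i\})^{\perp}$, which has dimension $n-r$. Conditioned on lying in this coset the samples are i.i.d.\ uniform on it, hence the $k-1$ differences $\mathbf{d}^{(j)}=\mathbf{z}^{(j)}\oplus\mathbf{z}^{(1)}$, $j=2,\ldots,k$, are i.i.d.\ uniform on $W$. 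Using the elementary bound that $t$ uniform vectors fail to span a $d$-dimensional $\mathbb{F}_2$-space with probability at most $2^{d-t}$, with $t=k-1=\Theta(n)$ and $d=n-r\le n$ this failure probability is $O(1/n)$, so with probability $1-O(1/n)$ the $\mathbf{d}^{(j)}$ span $W$ exactly. The algorithm computes $W$ from them by Gaussian elimination, then $W^{\perp}$, and outputs ${\cal S}=q^{-1}(W^{\perp})$.

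\emph{Step 3 (correctness and complexity).} Over $\mathbb{F}_2$, $W^{\perp}=(\mathrm{span}\{\mathbf{p}_i\})^{\perp\perp}=\mathrm{span}\{\mathbf{p}_i\}$, so every $\mathbf{p}_i\in W^{\perp}$ and hence ${\cal I}\subseteq{\cal S}$, while $|{\cal S}|=|W^{\perp}|=2^{r}\le 2^{s}\le 2^{s+1}$. Reading the $m$ samples and finding $\max f$ costs $O(m)$; the two Gaussian eliminations cost $O(n^3)$; enumerating $W^{\perp}$ from a basis of size $r\le s$ costs $O(2^{s}n)$. All of this is $\mathrm{poly}(n,2^s)$, and by a union bound over the two failure events in Steps 1--2 it succeeds with probability $1-O(1/n)$, as required (and then Lemma~\ref{Lem:learn} completes the learning of $f$). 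I expect the crux to be Step 2: pinning down the coset structure of the $\mathbf{a}_{\max}$-preimages and arguing that the observed differences are genuinely i.i.d.\ uniform on the single subspace $W$, so that the span-filling estimate applies. This is precisely where the unique sign property is essential --- without it the max-value samples could be distributed over several cosets, their differences would no longer be confined to one $(n-r)$-dimensional subspace, and the orthogonal complement could be exponentially larger than $2^{s+1}$. Steps 1 and 3 are routine (a Chernoff bound and standard $\mathbb{F}_2$ linear algebra).
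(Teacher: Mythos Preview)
Your argument is correct and matches the paper's in substance, though you phrase the linear-algebra step differently. The paper argues directly that $\mathrm{rank}(\mathbf{Y}_{\max})\ge n-s$ with high probability (because the rows are $2n$ uniform samples from an $(n-r)$-dimensional affine flat $\mathcal{H}$), and then bounds the number of solutions to each of the two systems $\mathbf{Y}_{\max}\mathbf{p}=\mathbf{0}$, $\mathbf{Y}_{\max}\mathbf{p}=\mathbf{1}$ by $2^{n-\mathrm{rank}}\le 2^s$, giving $|\mathcal{S}|\le 2^{s+1}$. You instead pass to the differences $\mathbf{d}^{(j)}=\mathbf{z}^{(j)}\oplus\mathbf{z}^{(1)}$, show they span $W=(\mathrm{span}\{\mathbf{p}_i\})^{\perp}$, and take $W^{\perp}$.

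These are equivalent descriptions of the same set: a vector $\mathbf{p}$ satisfies $\mathbf{Y}_{\max}\mathbf{p}\in\{\mathbf{0},\mathbf{1}\}$ iff $\mathbf{Y}_{\max}\mathbf{p}$ is constant iff $(\mathbf{z}^{(j)}\oplus\mathbf{z}^{(1)})^{T}\mathbf{p}=0$ for all $j$, i.e.\ iff $\mathbf{p}\in(\mathrm{span}\{\mathbf{d}^{(j)}\})^{\perp}$. So the $P$ that Algorithm~\ref{alg:learn} actually computes is exactly your $W^{\perp}$ once the differences span $W$. Your framing even yields the slightly sharper $|\mathcal{S}|=2^{r}\le 2^{s}$, whereas the paper's rank bound gives only $2^{s+1}$; either suffices for the lemma. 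Two small points: be explicit that $k\ge 2n$ (not just $\Theta(n)$) so that the span-failure bound $2^{(n-r)-(k-1)}$ is $o(1/n)$, and note that $\mathcal{S}$ consists of the \emph{supports} of vectors in $W^{\perp}$ rather than $q^{-1}(W^{\perp})$.
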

    \begin{proof}
     See the supplementary material.
    \end{proof}

     The proof of the above lemma involves showing that the random matrix $\mathbf{Y}_{\mathrm{max}}$ (see Algorithm \ref{alg:learn}) has rank at least $n-s$, leading to at most $2^s$ solutions for each equation in (\ref{Eqn:parity}). The feasible solutions can be obtained by Gaussian elimination in the binary field. 
      
       \begin{theorem}\label{Thm:Perf2}
      Let $f$ be an $s$-sparse function that satisfies \textit{at least} one of the following properties:
      \begin{enumerate}[(a)]
      \itemsep-0.5em
          \item  The coefficients $\{c_i\}_{i=1}^s$ are in general position. 
          \item  The $s$ parity functions are linearly independent.
          \item  All the coefficients are positive.
      \end{enumerate}
      Given labeled samples, Algorithm \ref{alg:learn} learns $f$ exactly (or $\mathbf{v}^{\mathrm{opt}}=\mathbf{c}$) in time $\mathrm{poly} \left( n,2^s \right)$ with probability $1- O\left(\frac{1}{n} \right)$.
      \end{theorem}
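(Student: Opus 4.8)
The plan is to reduce Theorem \ref{Thm:Perf2} to Lemma \ref{Lem:unique} and Lemma \ref{Lem:learn} by proving that each of the three hypotheses (a), (b), (c) forces the \emph{unique sign property}. Once this implication is in hand, Lemma \ref{Lem:unique} guarantees that in a $\mathrm{poly}(n,2^s)$-time run, Algorithm \ref{alg:learn} produces a family ${\cal S}$ with ${\cal I}\subseteq{\cal S}$ and $\lvert{\cal S}\rvert\le 2^{s+1}$ with probability $1-O(1/n)$; feeding this ${\cal S}$ together with $m=4096ns^2$ samples and $\epsilon=0$ into program (\ref{Eqn:convexprogram}), Lemma \ref{Lem:learn} (via Theorem \ref{Thm:compnoise}) recovers $\mathbf{c}=\mathbf{c}_{\cal I}$ exactly, since the true $s$-sparse vector is feasible and the error bound collapses to $0$ because $\epsilon=0$ and the $\mathcal{I}^c\cap{\cal S}$ part of that solution is $\mathbf{0}$. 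A union bound over the two failure events, $O(1/n)+O(1/4^n)=O(1/n)$, together with the fact that both phases run in time $\mathrm{poly}(n,2^s)$, then yields $\mathbf{v}^{\mathrm{opt}}=\mathbf{c}$ with the claimed probability.

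So the real work is the three implications, and for this I would first set up the relevant structure. As $\mathbf{x}$ ranges over $\{-1,1\}^n$, the sign pattern $\mathbf{a}=(\chi_1(\mathbf{x}),\dots,\chi_s(\mathbf{x}))$ is obtained from $q(\mathbf{x})\in\mathbb{F}_2^n$ by applying $q^{-1}$ to the image of $q(\mathbf{x})$ under the $\mathbb{F}_2$-linear map with matrix rows $\mathbf{p}_1^\top,\dots,\mathbf{p}_s^\top$; hence the set of \emph{realizable} patterns has size exactly $2^r$, where $r$ is the $\mathbb{F}_2$-rank of $\{\mathbf{p}_i\}$, and on every input with a fixed realizable pattern $\mathbf{a}$ the value of $f$ is the constant $\sum_{i=1}^s c_i a_i$. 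Thus $f$ attains at most $2^s$ distinct values, and the unique sign property is precisely the assertion that $\max\{\sum_i c_i a_i : \mathbf{a}\text{ realizable}\}$ is attained by a single $\mathbf{a}$.

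With this reformulation the three cases are short. For (a): if the coefficients are in general position, then for distinct patterns $\mathbf{a}\ne\mathbf{a}'$ the numbers $b_i:=(a_i-a_i')/2$ lie in $\{-1,0,1\}$ with at least one nonzero, so $\sum_i c_ia_i-\sum_i c_ia_i'=2\sum_i c_ib_i\ne 0$; hence all realizable values are distinct and the maximum is unique. For (b): linear independence of $\{\mathbf{p}_i\}$ gives $r=s$, so the map above is surjective onto $\mathbb{F}_2^s$ and every $\mathbf{a}\in\{-1,1\}^s$ is realizable; since all $c_i\ne 0$, the affine function $\mathbf{a}\mapsto\sum_i c_i a_i$ on $\{-1,1\}^s$ has the unique maximizer $a_i=\mathrm{sign}(c_i)$, which is realizable. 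For (c): the all-ones input realizes the pattern $(1,\dots,1)$, and when every $c_i>0$ this pattern strictly maximizes $\sum_i c_i a_i$ over all of $\{-1,1\}^s$, hence over the realizable subset as well. In each case the unique sign property holds, and the reduction above completes the proof.

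I do not anticipate a serious obstacle: the three implications are essentially one-line sign/linear-algebra arguments, and the substantive content — capturing ${\cal I}$ inside a set of size $2^{s+1}$ (through the rank lower bound on $\mathbf{Y}_{\mathrm{max}}$) and the exact compressed-sensing recovery — is already packaged in Lemma \ref{Lem:unique}, Lemma \ref{Lem:learn}, and Theorem \ref{Thm:compnoise}. The only point requiring care is making the correspondence between realizable sign patterns and the $\mathbb{F}_2$-rank of the parity vectors fully precise (in particular handling a possibly empty $S_i$, which is incompatible with (b) and harmless for (a) and (c)); and one should note that the maximizing pattern identified in (a)–(c) is exactly what Algorithm \ref{alg:learn} will repeatedly observe among random samples, which is what Lemma \ref{Lem:unique} exploits.
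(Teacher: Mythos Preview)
Your proposal is correct and follows essentially the same route as the paper: reduce to Lemmas \ref{Lem:unique} and \ref{Lem:learn}, and then verify that each of (a), (b), (c) implies the unique sign property via the same arguments (distinct values under general position, realizability of $(\mathrm{sign}(c_i))_i$ under linear independence, realizability of the all-ones pattern when all $c_i>0$). Your write-up is slightly more detailed---e.g., the $b_i=(a_i-a_i')/2$ computation in case (a) and the explicit union bound---but there is no substantive difference.
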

      \begin{proof}
      See the supplementary material.
      \end{proof}
      
      \textit{Smoothed Analysis Setting:} Perturbing $c_i$'s with Gaussian random variables of standard deviation $\sigma>0$ or by random variables drawn from any set of reasonable continuous distributions ensures that the perturbed function satisfies property (a) with probability $1$.  
             
       \textit{Random Parity Functions:} When $c_i$'s are arbitrary and the set of $s$ parity functions are drawn uniformly randomly from $2^{[n]}$, then property (b) holds with high probability if $s$ is a constant. 

      \begin{algorithm}
         \KwIn{Sparsity parameter $s$, $m_1=2 n2^s$ random labeled samples $ \{ \langle f \left(\mathbf{x}_i \right), \mathbf{x}_i \rangle \}_{i=1}^{m_1}$.}
	Pick samples $\{\mathbf{x}_{i_j}\}_{j=1}^{n_{\mathrm{max}}}$ corresponding to the maximum value of $f$ observed in all the $m$ samples. Stack all $ \mathbf{x}_{i_j} $ row wise into a matrix $\mathbf{X}_{\mathrm{max}}$ of dimensions $n_{\mathrm{max}} \times n$.\\
	Initialise ${\cal S} =\emptyset$. Let $\mathbf{Y}_{\mathrm{max}}=q \left( \mathbf{X}_{\mathrm{max}} \right)$. \\
	 Find all feasible solutions $\mathbf{p} \in \mathbb{F}_2^{n \times 1}$ such that: \\
	  \begin{equation}\label{Eqn:parity}
	   \mathbf{1}_{n_{\mathrm{max}} \times 1}=\mathbf{Y}_{\mathrm{max}} \mathbf{p} ~\mathrm{or~} \mathbf{0}_{n_{\mathrm{max}} \times 1}=\mathbf{Y}_{\mathrm{max}} \mathbf{p}
	 \end{equation}\\
	 Collect all feasible solutions  $\mathbf{p}$ to either of the above equations in the set $P \subseteq \mathbb{F}_2^{n \times 1} $.\\
	 ${\cal S}=\{ \{j \in [n]: \mathbf{p}(j)=1\} | \mathbf{p} \in P \}$. \\
	  Using $m=4096 ns^2$ more samples (number of rows of $\mathbf{A}$ is $m$ corresponding to these new samples), solve:
	\begin{equation}\label{Eqn:convexprogramNoiseless}
         \mathbf{\beta}_{{\cal S}}^{\mathrm{opt}}=\min \lVert \mathbf{\beta}_{{\cal S}} \rVert_1 ~\mathrm{such~that~}  \mathbf{A} \mathbf{\beta}_{{\cal S}} = \mathbf{y} , 
     \end{equation}   
     where $\mathbf{y}$ is the vector of $m$ observed values.\\
 	 Set $\mathbf{v}^{\mathrm{opt}}=\beta^{opt}_{{\cal S}}$. \\
    \KwOut{$\mathbf{v}^{\mathrm{opt}}$.}
\caption{LearnBool}
 \label{alg:learn}
\end{algorithm}     
         
%
%
%

%

\section{A Sparse Polynomial Learning Application: Hypergraph Sketching}
   Hypergraphs can be used to model the relations in real world data sets (e.g., books read by users in Amazon). We show that the cut functions on hypergraphs satisfy the unique sign property. Learning a cut function of a sparse hypergraph from random cuts is a special case of learning a sparse polynomial from samples drawn uniformly from the Boolean hypercube. To track the evolution of large hypergraphs over a small time interval, it is enough to learn the cut function of the difference graph which is often sparse. This is called the \emph{graph sketching problem}. Previously, graph sketching was applied to social network evolution \cite{stobbe2012learning}. We generalize this to hypergraphs showing that they satisfy the unique sign property, which enable faster algorithms, and provide experimental results on real data sets.

\subsection{Graph Sketching}
A \emph{hypergraph $G=(V,E)$} is a set of vertices $V$ along with a set $E$ of subsets of $V$ called the \emph{hyperedges}. 
\emph{The size of a hyperedge} is the number of variables that the hyperedge connects. Let $d$ be the maximum hyperedge size of graph $G$. Let $\lvert V \rvert =n$ and $\lvert E \rvert=s$.

A random cut $S\subseteq V$ is a set of vertices selected uniformly at random. Define the value of the cut $S$ to be $c(S)=\lvert \{e \in E: e \bigcap S \neq \emptyset,~e \bigcap V-S \neq \emptyset \} \rvert$. Graph sketching is the problem of identifying the graph structure from random queries that evaluate the value of a random cut, where $s\ll n$ (sparse setting). Hypergraphs naturally specify relations among a set of objects through hyperedges. For example, Amazon users can form the set $E$ and Amazon books can form the set $V$. Each user may read a subset of books which represents the hyperedge. Learning the hypergraph corresponds to identifying the sets of books bought by each user. For more examples of hypergraphs in real data sets, we refer the reader to \cite{li2013relational}. Such hypergraphs evolve over time. The difference graph between two consecutive time instants is expected to be sparse (number of edges $s$ and maximum hyperedge size $d$ are small). We are interested in learning such hypergraphs from random cut queries.

 For simplicity and convenience, we consider \emph{the cut complement} query, i.e., c$-$cut, which returns $s- c(S)$. One can easily represent the c$-$cut query with a sparse polynomial as follows: Let node $i$ correspond to variable $x_i \in \{-1,+1\}$. A random cut involves choosing $x_i$ uniformly randomly from $\{-1,+1\}$. The variables assigned to $+1$ belong to the random cut $S$. The value is given by the polynomial
\begin{equation}
\label{eq:ccut}
f_{c-cut}(\textbf{x})=\sum_{\mathcal{I}\in E}\left(\prod_{i \in \mathcal{I}}\frac{(1+x_i)}{2}+\prod_{i \in \mathcal{I}}\frac{(1-x_i)}{2}\right) = \sum_{\mathcal{I}\in E}\frac{1}{2^{|\mathcal{I}|-1}}\left(\sum_{\mathcal{J}\subseteq \mathcal{I}, \atop |\mathcal{J}| \text{is even}}(1+\prod_{i\in \mathcal{J}}x_i)\right).
\end{equation}

Hence, the c$-$cut function is a sparse polynomial where the sparsity is at most $s2^{d-1}$. The variables corresponding to the nodes that belong to some hyperedge appear in the polynomial. We call these \emph{the relevant variables} and the number of relevant variables is denoted by $k$. Note that, in our sparse setting $k\leq sd$. We note that for a hypergraph with no singleton hyperedge, given the c$-$cut function, it is easy to recover the hyper edges from (\ref{eq:ccut}). Therefore, we focus on learning the c$-$cut function to sketch the hypergraph.

When $G$ is a graph with edges (of cardinality $2$), the compressed sensing approach (using program \ref{Eqn:convexprogram}) using the cut (or c$-$cut) values as measurements is shown to be very efficient in \cite{stobbe2012learning} in terms of the sample complexity, i.e., the required number of queries.  The run time is efficient because total number of candidate parities is $O(n^2)$. However when we consider hypergraphs, i.e., when $d$ is a large constant, the compressed sensing approach cannot scale computationally ($\mathrm{poly}(n^d)$ runtime). Here, based on the theory developed, we give a faster algorithm based on the unique sign property with sample complexity $m_1=O(2^k d \log n + 2^{2d+1}s^2 (\log n +k))$ and run time of $O(m_1 2^k, n^{2} \log n))$.

We observe that the c$-$cut polynomial satisfies the unique sign property. From (\ref{eq:ccut}), it is evident that the polynomial has only positive coefficients. Therefore, by Theorem \ref{Thm:Perf2}, algorithm LearnBool succeeds. The maximum value of the c$-$cut function is the number of edges. Notice that the maximum value is definitely observed in two configurations of the relevant variables: If either all relevant variables are $+1$ or all are $-1$. Therefore, the maximum value is observed in every $2^{k-1}\leq 2^{sd}$ samples. Thus, a direct application of \emph{LearnBool} yields $\mathrm{poly}(n,2^{k-1})$ time complexity, which improves the $O(n^d)$ bound for small $s$ and $d$.

Improving further, we provide a more efficient algorithm tailored for the hypergraph sketching problem, which makes use of the unique sign property and some other properties of the cut function. Algorithm $\mathrm{LearnGraph}$ (Algorithm \ref{alg:learngraph}) is provided in the supplementary material.

\begin{theorem}\label{Thm:learngraphproof}
Algorithm \ref{alg:learngraph} exactly learns the c$-$cut function with probability $1-O(\frac{1}{n})$with sample complexity $m_1=O(2^{k} d \log n +2^{2d+1}s^2 (\log n +k))$ and time complexity $O(2^k m_1+ n^{2}d \log n))$ .
\end{theorem}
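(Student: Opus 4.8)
The plan is to analyze Algorithm~\ref{alg:learngraph} in two essentially independent phases: the first extracts the \emph{combinatorial structure} of $G$ (the relevant vertices and their partition into connected components) from the samples that attain the maximum c$-$cut value, and the second reconstructs the exact Fourier coefficients by a small, \emph{restricted} compressed-sensing recovery that only touches those relevant coordinates. The two phases must be run on disjoint sample batches so that the candidate set produced in Phase~1 is independent of the randomness used by the recovery program.

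First I would record the combinatorial facts that drive everything. Since $f_{c-cut}=s-c(S)$ with $c(S)\ge 0$, the value $s$ is attained exactly when no hyperedge is cut, i.e.\ exactly when every connected component of $G$ on the relevant vertices is monochromatic under $\mathbf{x}$; equivalently, because all Fourier coefficients of $f_{c-cut}$ are positive, the maximizing \emph{sign pattern of the parities} is the unique all-ones pattern (property (c) of Theorem~\ref{Thm:Perf2}), so the hypothesis of Lemma~\ref{Lem:unique} holds. If $G$ has $t$ components on the $k$ relevant vertices, a random cut attains the maximum with probability $2^{t-k}\ge 2^{1-k}$, so a Chernoff bound shows that among the first $m_1^{(1)}=O(2^{k}d\log n)$ samples at least $L=\Theta(d\log n)$ attain the maximum, except with probability $O(1/n)$; here I use the a~priori bound $k\le sd$ (known from the sparsity parameters) to decide when this first batch ends, and $\Omega(\log n)$ max-samples is already enough for the union bounds below.

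\textbf{Phase 1 (structure).} Form $\mathbf{Y}_{\max}=q(\mathbf{X}_{\max})$ from the $L$ maximizing samples. Over these rows, two relevant vertices in the same component have \emph{identical} columns (they always receive the same sign), while two vertices lying in different components, or at least one of which is irrelevant, have independent uniform columns and hence agree on all $L$ rows with probability $2^{-L}$. Union-bounding over the $\binom{n}{2}$ pairs with $L=\Theta(d\log n)\ge 3\log_2 n$, with probability $1-O(1/n)$ the equivalence classes of equal columns of $\mathbf{Y}_{\max}$ are exactly the components of $G$; since every hyperedge has size $\ge 2$ each relevant vertex lies in a class of size $\ge 2$, so the relevant set $R$ with $|R|=k$ and its partition $R=C_1\sqcup\cdots\sqcup C_t$ are read off directly. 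Comparing all column pairs over $L$ rows costs $O(n^{2}d\log n)$, the source of that term.

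\textbf{Phase 2 (coefficients).} Set $\mathcal{S}=\{T\subseteq R:\ |T\cap C_j|\text{ even for all }j\}$, so $|\mathcal{S}|\le 2^{k}$; and $\mathcal{I}\subseteq\mathcal{S}$ because every monomial of $f_{c-cut}$ is an even-size subset of a single hyperedge, hence of a single $C_j$. Run the recovery of program~(\ref{Eqn:convexprogramNoiseless}) restricted to $\mathcal{S}$ on a fresh batch of $m_1^{(2)}$ samples. Since $\mathcal{S}$ involves only the $k$ relevant coordinates and has at most $2^{k}$ columns, the incoherence estimate of \cite{sahand2012} underlying Theorem~\ref{Thm:compnoise} applies with the ambient dimension $n$ replaced by $k$ and with $2^{k}$ in place of $4^{n}$ in the failure probability; tracking constants and using the worst-case support bound $s'=s2^{d-1}$ on $f_{c-cut}$ gives exact recovery ($\mathbf{v}^{\mathrm{opt}}=\mathbf{c}$) from $m_1^{(2)}=O\!\big(2^{2d+1}s^{2}(k+\log n)\big)$ samples with failure probability $O(1/n)$; the restricted problem has $2^{k}$ variables and $m_1^{(2)}$ constraints and is handled within $O(2^{k}m_1)$ time. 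Finally, translate the recovered Fourier expansion back to the hyperedge set exactly as in the discussion after~(\ref{eq:ccut}) (valid since $G$ has no singleton hyperedge). A union bound over the three $O(1/n)$ failure events and $m_1=m_1^{(1)}+m_1^{(2)}=O(2^{k}d\log n+2^{2d+1}s^{2}(\log n+k))$ give the stated bounds.

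\textbf{Main obstacle.} The Chernoff and pairwise union bounds are routine; the genuinely delicate step is the sharpened recovery guarantee with only $O(\log n)$ dependence on $n$ — i.e.\ re-deriving Theorem~\ref{Thm:compnoise} for the sub-matrix $A_{\mathcal{S}}$ with $|\mathcal{S}|\le 2^{k}$ rather than $2^{n}$ columns, and checking that the incoherence/RIP constants of \cite{sahand2012} degrade only logarithmically in $|\mathcal{S}|$, so that $m_1^{(2)}$ samples really suffice for exact, high-probability reconstruction; a secondary point is confirming that restricting to $\mathcal{S}$ (equivalently, intersecting the parities found in Phase~1) still captures all of $\mathcal{I}$, which is immediate from the even-subset structure of $f_{c-cut}$.
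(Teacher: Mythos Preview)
Your Phase~1 matches the paper: form $\mathbf{X}_{\max}$, use a Chernoff bound to get $n_{\max}\ge\Theta(d\log n)$ maximizing rows with probability $1-O(1/n)$, and recover the relevant vertices (and their components) via a union bound over $\binom{n}{2}$ column pairs. The paper phrases this through $\mathbf{R}=\mathbf{X}_{\max}^T\mathbf{X}_{\max}$ and the test $R(i,j)=n_{\max}$, which is exactly your equal-columns test; this is also where the $O(n^2 d\log n)$ term originates.

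Your Phase~2, however, is \emph{not} what Algorithm~\ref{alg:learngraph} does, so the proposal is not a proof of the stated theorem. LearnGraph never calls program~(\ref{Eqn:convexprogramNoiseless}); instead, for each candidate parity $M$ (even subsets of each $\mathcal{S}_i$ of size $\le d$, at most $2^k$ in total) it computes the empirical correlation $\hat c_M=\tfrac{1}{m_1}\sum_{i}f_{c-cut}(\mathbf{x}_i)\,\chi_M(\mathbf{x}_i)$ and rounds to the nearest integer multiple of $2^{-d}$. The paper's Phase~2 argument is then a one-line Chernoff bound (Lemma~\ref{lem:chernoff}): since $0\le f_{c-cut}\le s$, taking $m_1\ge c\,2^{2d+1}s^2(k+\log n)$ makes each of the $\le 2^k$ estimates land within $2^{-d}$ of the true $c_M$ with probability $1-O(1/n)$ after a union bound, and rounding is exact because every Fourier coefficient of $f_{c-cut}$ is an integer multiple of $2^{-(d-1)}$. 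This elementary argument is precisely where the second summand of $m_1$ comes from and completely sidesteps the ``main obstacle'' you identified---no re-derivation of Theorem~\ref{Thm:compnoise} with ambient dimension $k$ in place of $n$ is needed at all. Your compressed-sensing route might be salvageable for a modified algorithm, but as written it analyzes the wrong second phase, leaves the key recovery step as an unproved obstacle, and does not obviously deliver the $O(2^k m_1)$ runtime (which is immediate for one correlation pass per candidate but not for an LP solver).
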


\begin{proof}
See the supplementary material.
\end{proof}

\subsection{Yahoo! Messenger User Communication Pattern Dataset}
We performed simulations using MATLAB on an Intel(R) Xeon(R) quad-core $3.6$ GHz machine with $16$ GB RAM and $10$M cache. We run our algorithm on the Yahoo! Messenger User Communication Pattern Dataset \cite{YahooDataset}. This dataset contains the timestamped user communication data, i.e., information about a large number of messages sent over Yahoo! Messenger, for a duration of $28$ days. 

%
%

\textit{Dataset:} Each row represents a message. The first two columns show the day and time (time stamp) of the message respectively. The third and fifth columns show the ID of the transmitting and receiving users, respectively. The fourth column shows the zipcode (spatial stamp) from which this particular message is transmitted. The sixth column shows if the transmitter was in the contact list of the reciver user (y) or not (n). If a transmitter sends the same receiver more than one message from the same zipcode, only the first message is shown in the dataset. In total, there are 100000 unique users and 5649 unique zipcodes.

We form a hypergraph from the dataset as follows: The transmitting users form the hyperedges and the receiving users form the nodes of the hypergraph. A hyperedge connects a set $T$ of users if there is a transmitting user that sends a message to all the users in $T$. In any given time interval $\delta t$ (short time interval) and small set of locations $\delta x$ specified by the number of zip codes, there are few users who transmit ($s$) and they transmit to very few users ($d$). The complete set of nodes in the hypergraph ($n$) is taken to be those receiving users who are active during $m$ consecutive intervals of length $\delta t$ and in a set of $\delta x$ zipcodes. This gives rise to a sparse graph. We identify the active set of transmitting users (hyperedges) and their corresponding receivers (nodes in these hyperedges) during a short time interval $\delta t$ and a randomly selected space interval ($\delta x$, i.e., zip codes) from a large pool of receivers (nodes) that are observed during $m$ intervals of length $\delta t$. Details of $\delta t$, $m$ and $\delta x$ chosen for experiments are given in Table \ref{tab:param}. We note that $n$ is in the order of $1000$ usually.


\begin{figure}
\centering
\subfloat[Runtime vs. \# of variables, $d=3$ and $s=1$.]{\label{fig:comparison}\includegraphics[width=6cm]{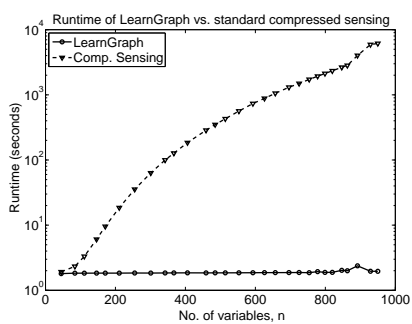}}
\hspace{20pt}
\subfloat[Probability of error vs. $\alpha$.]{\label{fig:PErr}\includegraphics[width=6cm]{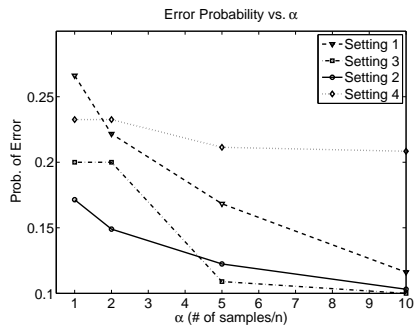}}
\label{fig:Figures}
\caption{Performance figures comparing LearnGraph and Compressed Sensing approach.}
\end{figure}
\textbf{Remark:} Our task is to learn the c$-$cut function from the random queries, i.e., random +/-1 assignment of variables and corresponding c$-$cut values. The generated sparse graph contains only hyperedges that have more than $1$ node. Other hyperedges (transmitting users) with just one node in the sparse hypergraph are not taken into account. This is because a singleton hyperedge $i$ is always counted in the c$-$cut function thereby effectively its presence is masked. First, we identify the relevant variables that participate in the sparse graph. After identifying this set of candidates, correlating the corresponding candidate parities with the function output yields the Fourier coefficient of that parity (see Algorithm \ref{alg:learngraph}). 

\subsubsection{Performance Comparison with Compressed Sensing Approach}
First, we compare the runtime of our implementation $\mathrm{LearnGraph}$ with the compressed sensing based algorithm from \cite{stobbe2012learning}. Both algorithms correctly identify the relevant variables in all the considered range of parameters. The last step of finding the corresponding Fourier coefficients is omitted and can be easily implemented (Algorithm \ref{alg:learngraph}) without significantly affecting the running time. As can be seen in Tables \ref{tab:runtime}, \ref{tab:runtime2} and Fig. \ref{fig:comparison}, LearnGraph scales well to graphs on thousands of nodes. On the contrary, the compressed sensing approach must handle a measurement matrix of size $O(n^d)$, which becomes prohibitively large on graphs involving more than a few hundred nodes.

\subsubsection{Error Performance of LearnGraph}
\label{subsubsec:pError}
Error probability (probability that the correct c$-$cut function is not recovered) versus the number of samples used is plotted for four different experimental settings of $\delta t$, $\delta x$ and $m$ in Fig. \ref{fig:PErr}. For each time interval, the error probability is calculated by averaging the number of errors among $100$ different trials. For each value of $\alpha$ (number of samples), the error probability is averaged over time intervals to illustrate the error performance. We only keep the intervals for which the graph filtered with the considered zipcodes contains at least one user with more than one neighbor. We find that for the first $3$ settings, the error probability decreases with more samples. For the fourth setting, $d$ and $s$ are very large and hence a large number of samples are required. For that reason, the error probability does not improve significantly. The probability of error can be reduced by repeating the experiment multiple times and taking a majority, at the cost of significantly more samples. Our plot shows only the probability of error without such a majority amplification. 

\begin{table}
\tiny
\centering
\caption{Runtime for different graphs. \textbf{LG:} LearnGraph, \textbf{CS:} Compressed sensing based alg.}
\subfloat[Runtime for $d=4$ and $s=1$ graph.]{\centering \label{tab:runtime}\begin{tabular}{|c|c|c|c|c|c|}
\hline
\backslashbox{Alg.\kern-2em}{\kern-2em$n$} & 88 & 159  & 288 &  556 & 1221\\\hline
LG & 1.96 & 2.13 & 2.23 & 2.79 & 4.94\\\hline
CS & 265.63 & - & - & - & - \\\hline
\end{tabular}}
\hspace{1cm}
\subfloat[Runtime for $d=4$ and $s=3$ graph.]{\centering\label{tab:runtime2}\begin{tabular}{|c|c|c|c|c|c|}
\hline
\backslashbox{Alg.\kern-2em}{\kern-2em$n$} & 52 & 104  & 246 &  412 & 1399\\\hline
LG & 1.91 & 2.08 & 2.08 & 2.30 & 4.98\\\hline
CS & 39.89 & $>10823$ & - & - & - \\\hline
\end{tabular}}\\
\subfloat[Simulation parameters for Fig. \ref{fig:PErr}]{\begin{tabular}{c||c|c|c|c|c|c}
Setting No. & Interval & \# of Int. & $n$ & $\max(d)$ & $\max(s)$ & Zip. Set Size\\ \hline\hline
Setting 1 & 5 min. & 20 & 6822 & 10 & 19 & 20\\ \hline
Setting 2 & 20 sec. & 200 & 5730 & 22 & 4 & 200\\\hline
Setting 3 & 10 min. & 10 & 6822 & 11 & 13 & 2\\\hline
Setting 4 & 2 min. & 50 & 6822 & 30 & 21 & 50
\end{tabular}}
\label{tab:param}
\end{table}

\section{Conclusions}
We presented a novel algorithm for learning sparse polynomials by random samples on the Boolean hypercube. While the general problem of learning all sparse polynomials is notoriously hard, we show that almost all sparse polynomials can be efficiently learned using our algorithm. This is because our unique sign property holds for randomly perturbed coefficients, in addition to several other natural settings. As an application, we show that graph and hypergraph sketching lead to sparse polynomial learning problems that always satisfy the unique sign property. This allows us to obtain efficient reconstruction algorthms that outperform the previous state of the art for these problems. 

An important open problem is to achieve the sample complexity of \cite{stobbe2012learning} while keeping the computational complexity polynomial in $n$. 

\subsubsection*{Acknowledgments} M.K, K.S. and A.D. acknowledge the support of NSF via CCF 1422549, 1344364, 1344179 and DARPA STTR and a ARO YIP award.

\section{Appendices}
\subsection{Proof of Theorem \ref{Thm:Perf2}}
      We prove Theorem \ref{Thm:Perf2} at the end of this section. Next, we provide the proof for Lemma \ref{Lem:unique} about Algorithm \ref{alg:learn} that will be used in the proof. Since the function $f$ is $s$-sparse, it takes at most $2^s$ distinct real values. 
           
\subsubsection*{Proof of Lemma \ref{Lem:unique}}\label{Sec:lemproof}
        Let $E_1$ be the event that the maximum value observed among $m_1$ samples in the algorithm \ref{alg:learn} is the maximum value attained by $f$. Note that, the probability that the function attains the maximum value is at least $\frac{1}{2^s}$. To see this, if the parity functions have rank $r$, then the set of $r$ linearly independent parity functions take values uniformly in the hypercube $\{-1,1\}^r$ and other are determined by these $r$ signs. Hence, the probability of finding the maximum value is $\frac{1}{2^r} \geq \frac{1}{2^s}$. If the functions satisfies the unique sign property for the maximum value and if $E_1$ is true, it is easily seen that the actual party functions $\mathbf{p}_i$ are in the set $P$ in the algorithm \ref{alg:learn}.
                
          Consider the algorithm \ref{alg:learn}. Let $E_3$ be the event that the matrix $\mathbf{Y}_{\mathrm{max}}$ has at least rank $n-s$. $E_3$ implies that  $\lvert P \rvert=\lvert {\cal S} \rvert \leq 2^{s+1}, ~ \forall 1 \leq i \leq s$. Let $E_2$ be the event that $n_{\mathrm{max}} > 2n$. Conditioned on $E_2$ and $E_1$ being true, we first argue that the rank of $\mathbf{Y}_{\mathrm{max}}$  is at least $n-s$ with high probability. Let the rank of the actual set of parity functions $\left[ \mathbf{p}_1, \mathbf{p}_2 \ldots \mathbf{p}_s \right]$ be $k \leq s$.
              
               If $E_1$ and $E_2$ are true, then $\mathbf{Y}_{\mathrm{max}}$ contains $2n$ random samples such that they all produce the same sign pattern $\mathbf{a}_{\mathrm{max}}$ because the actual function $f$ satisfies the unique sign pattern property for the maximum value. Let $\mathbf{z}_{\mathrm{max}}=q\left( \mathbf{a}_{\mathrm{max}} \right)$. Observe that rows of $\mathbf{Y}_{\mathrm{max}}$ are random samples uniformly drawn from the hyperplane  $ {\cal H}=\{ \mathbf{x} \in \mathbb{F}_2^{n \times 1}: \mathbf{x}^T \left[ \mathbf{p}_i \right] = z_{\mathrm{max}} (i), ~\forall 1 \leq i \leq s \}$. Since the rank of the parity functions is  $k$, the dimension of  ${\cal H}$ is $n-k$. Now, the rank of space spanned by $2n$ samples drawn randomly uniformly from ${\cal H}$ is at least the rank of space spanned by $2n$ samples drawn randomly uniformly  from $\mathbb{F}_2^{1 \times n-k}$. The probability that a random $2n \times n-k$ binary matrix is full rank is given by: 
                    \begin{align}\label{Eqn:int1}
                       \mathrm{Pr} \left(\mathrm{a~random~}2n \times n-k \mathrm{~binary~matrix~is~full~rank} \right) & = \prod \limits_{i=0}^{n-k-1} \left(1- \frac{1}{2^{2n-i}} \right) \geq \left(1- \frac{1}{2^n} \right)^{n-k} \nonumber \\
                        & \geq \left(1-\frac{1}{2^n} \right)^{n} \geq  1- O \left(\frac{1}{n} \right)                    
                    \end{align}       
             Hence, $\mathrm{Pr} \left( E_3 | E_2 \bigcap E_1  \right) \geq 1 - O \left( \frac{1}{n} \right)$. $\mathrm{Pr} \left( E_1 \bigcap E_2 \right)$ is the probability that there are at least $n_{\mathrm{max}}$ samples corresponding to the maximum value of the actual function in the $2n2^s$ samples drawn. Therefore, $\mathrm{Pr} \left( E_1 \bigcap E_2 \right) \geq 1-\left( 1-\frac{1}{2^s}\right)^{2n2^s-2n}$ because the maximum value of $f$ is seen with probability at least $\frac{1}{2^s}$. 
              Using this in the following chain, we have:
                     \begin{align}
                      \mathrm{Pr} \left( \lvert {\cal S} \rvert \leq 2^{s+1},~ {\cal I}\subseteq {\cal S} \right) & \geq \mathrm{Pr} \left( E_1 \bigcap E_2 \bigcap E_3 \right) \geq \mathrm{Pr} \left(E_1 \bigcap E_2 \right) \mathrm{Pr} \left(E_3 | E_2 \bigcap E_1 \right) \nonumber \\
                                                                                                                         & \geq \mathrm{Pr} \left( E_1 \bigcap E_2 \right)  \left( 1- O \left(\frac{1}{n} \right)  \right) ~ \left( \mathrm{by~} (\ref{Eqn:int1}) \right) \nonumber \\
                                                                                                                         &  \geq \left(1-\left(1 - \frac{1}{2^s} \right)^{2 n 2^s - 2n} \right)   \left( 1- O \left(\frac{1}{n} \right)  \right)   \nonumber \\
                                                                                                                         & \geq  \left( 1-\exp \left( - 2n \left (1 - \frac{1}{2^s} \right) \right)\right)   \left( 1- O \left(\frac{1}{n} \right)  \right)  \nonumber       \\
                                                                                                                          & \geq  \left(1- O \left( \frac{1}{n}\right) \right)  \left( 1- O \left(\frac{1}{n} \right)  \right)  \geq    1- O \left(\frac{1}{n} \right) .                                                                                                                                                                                                      
                     \end{align}                  

     Now, we relate the unique sign property to the conditions mentioned in Theorem \ref{Thm:Perf2} for its proof.  
\subsubsection*{Proof of Theorem \ref{Thm:Perf2}}
        Due to Lemmas \ref{Lem:unique} and \ref{Lem:learn}, we just need to show that each of the conditions in the theorem implies the unique sign property, i.e., the maximum value of the function $f$ is attained when the set of parity functions takes a unique sign pattern.
        
        \textit{Case 1:} If the coefficients are in general position (Definition \ref{def:genpos}), all values taken by the function correspond to distinct sign patterns. This implies the unique sign property for the maximum value.
        
         \textit{Case 2:} If all the parity functions are linearly independent, any sign pattern can be realized. Then, the sign pattern $\left[\mathrm{sign} \left(c_1 \right),\mathrm{sign} \left(c_2 \right) \ldots \mathrm{sign} \left(c_s \right) \right]$ can be realized by the set of parity functions and this produces the value $\sum \limits_{i=1}^s \lvert c_i \rvert$. And any other sign pattern will produce a strictly lesser value as all $c_i$ are nonzero. Hence, the maximum value is unique in this case. 
       
       \textit{Case 3:} Let us consider the case when all the coefficients are positive. Even if the parity functions are linearly dependent, the sign pattern with all $+1$'s can be produced and this attains the unique maximum value $\sum \limits_{i=1}^s \lvert c_i \rvert$. This implies the unique sign property.   
  
\subsection{Algorithms and Guarantees: Noisy Case}\label{sec:Noisy}
In this section, we provide our algorithm for learning an approximately $\left(s,\nu\right)$-sparse function with noisy samples, and prove guarantees regarding the error between the function learnt and the actual function. When $m$ random samples are observed, the noisy output model for an approximately $(s,\nu)$-sparse function $f$ is given by: 
\begin{equation}\label{eq:noisy}
\mathbf{y=Ac+\varepsilon} 
\end{equation}
where $\mathbf{A}$ is the $m$ by $2^n$ matrix where each row corresponds to a sample $\mathbf{x}$ and each column corresponds to a parity function and $\mathbf{c}$ is the set of Fourier coefficients for $f$ and the noise $\lvert\varepsilon_i\rvert \leq \epsilon,~ 1 \leq i \leq m$. We recall that $\sum \limits_{S \subseteq {\cal I}^c} \lvert c_S \rvert < \nu$ for an approximately sparse $f$. We assume that $\epsilon+\nu$ is known.
      \begin{algorithm}
         \KwIn{The sequence of labeled samples $\langle f(\mathbf{x}_i)+\varepsilon_i,\mathbf{x}_i \rangle_{i=1}^m$}
         Initialise $\mathbf{X}_{max}=\emptyset$.  \\
         Let $\eta$ be the maximum value observed.\\
         Stack all the inputs $\mathbf{x}_i$, such that $f\left(\mathbf{x}_i\right)+\varepsilon_i$ is in the neighborhood of radius $2 \left( \epsilon+\nu \right)$ around $\eta$, into $\mathbf{X}_{max}.$\\
    \KwOut{$\mathbf{X}_{max}$.}
\caption{MaxCluster}
 \label{alg:cluster}
\end{algorithm}   


      \begin{algorithm}
         \KwIn{Sparsity parameter $s$, $\epsilon+\nu$, $m_1=2 n2^s$ random labeled samples $ \{ \langle f \left(\mathbf{x}_i \right), \mathbf{x}_i \rangle \}_{i=1}^{m_1}$.}
	Run MaxCluster algorithm to obtain $\mathbf{X}_{\mathrm{max}}$.\\
	Initialise $P=\emptyset$. \\
	Find all feasible solutions $\mathbf{p}$ such that: $\mathbf{1}=q(\mathbf{X}_{\mathrm{max}}) \mathbf{p} ~\mathrm{or}~ \mathbf{0}=q(\mathbf{X}_{\mathrm{max}} )\mathbf{p}$. \\	
 	 Collect all feasible $\mathbf{p}$ in the set $P \subseteq \mathbb{F}_2^n$. \\
	 ${\cal S}=\{ \{j \in [n]: \mathbf{p}_{i}(j)=1\} | \mathbf{p}_i \in P \}$. \\
		Using $m=4096 ns^2$ more samples (number of rows of $\mathbf{A}$ is $m$ corresponding to these new samples), solve:
	\begin{equation}\label{Eqn:convexprogramNoisy}
         \mathbf{\beta}_{{\cal S}}^{\mathrm{opt}}=\min \lVert \mathbf{\beta}_{{\cal S}} \rVert_1 ~\mathrm{such~that~}  \sqrt{\frac{1}{m}} \lVert \mathbf{A} \mathbf{\beta}_{{\cal S}} - \mathbf{y} \rVert_2 \leq \epsilon+\nu,
     \end{equation}   
     where $\mathbf{y}$ is the vector of $m$ noisy observed values (as in (\ref{Eqn:convexprogramNoiseless}))\\
 	 Set $\mathbf{v}^{\mathrm{opt}}=\beta^{opt}_{{\cal S}}$. \\
    \KwOut{$\mathbf{v}^{\mathrm{opt}}$.}
\caption{LearnBoolNoisy}
 \label{alg:learnNoisy}
\end{algorithm}   
Now we state our main thoerem for learning a sparse function from noisy observations. 


\begin{theorem}
\label{Thm:Noisy}
Assume $f$ is an approximately $(s,\nu)$-sparse function as given in Definition $\ref{def:approx}$ and observed samples satisfy the noise model in  (\ref{eq:noisy}). Then, Algorithm \ref{alg:learnNoisy} outputs $\mathbf{v}^{\mathrm{opt}}$ in time $ \mathrm{poly} (n,2^s)$ with probability $1- O \left( \frac{1}{n} \right)$ satisfying $\lVert \mathbf{c}-\mathbf{v}^{\mathrm{opt}} \rVert_2 \leq \alpha_1\epsilon+\alpha_2\nu$, if $f$ satisfies at least one of the following properties:
\begin{enumerate}[(a)]
\itemsep-0.6em
\item The coefficients $\{c_S\}_{{S \in \cal I}}$ are $4(\nu+\epsilon)$-separated.
\item The set of parity functions $\chi_i (\cdot)$ are linearly independent, and $\min\limits_{S \in {\cal I}}{c_S}>4(\epsilon +\nu)$.
\item All the coefficients are positive, and $\min \limits_{S \in {\cal I}}{c_S}>4(\epsilon +\nu)$.
\end{enumerate}
Here, $\alpha_1$ and $\alpha_2$ are some constants.
\end{theorem}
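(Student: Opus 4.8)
The plan is to follow the noiseless analysis (the proofs of Lemma~\ref{Lem:unique} and Theorem~\ref{Thm:Perf2}), with the cluster $\mathbf{X}_{\max}$ produced by MaxCluster (Algorithm~\ref{alg:cluster}) playing the role of ``samples attaining the exact maximum'', and then finish by invoking the compressed-sensing guarantee of Theorem~\ref{Thm:compnoise} on the restricted column set $\mathcal{S}$. Decompose $f = f_{\mathcal{I}} + f_{\mathcal{I}^c}$, where $f_{\mathcal{I}}$ carries the $s$ ``large'' parities; then $\lvert f(\mathbf{x}) - f_{\mathcal{I}}(\mathbf{x}) \rvert \le \lVert \mathbf{c}_{\mathcal{I}^c} \rVert_1 < \nu$ for every $\mathbf{x}$, so $\lvert y_i - f_{\mathcal{I}}(\mathbf{x}_i) \rvert \le \epsilon + \nu$ for every observed sample. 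Let $M$ be the maximum value of $f_{\mathcal{I}}$, which is unique and attained exactly at the sign pattern $\mathbf{a}_{\max}$ since each of (a), (b), (c) implies the unique sign property for $f_{\mathcal{I}}$ by the proof of Theorem~\ref{Thm:Perf2}, and let $\eta$ be the largest observed $y_i$. A first easy step is $\eta \in [M - \epsilon - \nu,\, M + \epsilon + \nu]$: the upper bound holds for every sample, and the lower bound holds once at least one observed sample realizes $\mathbf{a}_{\max}$.

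The crucial new step, and the one I expect to be the main obstacle, is showing that $\mathbf{X}_{\max}$ contains \emph{exactly} the observed samples whose parities realize $\mathbf{a}_{\max}$; otherwise the true parities $\mathbf{p}_i$, $S_i \in \mathcal{I}$, would fail to be constant on the rows of $\mathbf{X}_{\max}$ and would not appear among the solutions $P$, costing us $\mathcal{I} \subseteq \mathcal{S}$. On one hand, a sample realizing $\mathbf{a}_{\max}$ has $\lvert y_i - \eta \rvert \le \lvert y_i - M \rvert + \lvert M - \eta \rvert \le 2(\epsilon+\nu)$, hence is collected; on the other hand, any collected sample obeys $\lvert f_{\mathcal{I}}(\mathbf{x}_i) - M \rvert \le \lvert f_{\mathcal{I}}(\mathbf{x}_i) - y_i \rvert + \lvert y_i - \eta \rvert + \lvert \eta - M \rvert \le 4(\epsilon+\nu)$. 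It then suffices to check that each hypothesis forces every realizable value $v \ne M$ of $f_{\mathcal{I}}$ to satisfy $M - v > 8(\epsilon+\nu)$. For (a): for a realizable $\mathbf{b} \ne \mathbf{a}_{\max}$, $M - f_{\mathcal{I}}(\mathbf{b}) = 2 \bigl\lvert \sum_i c_i \tfrac{a_{\max,i} - b_i}{2} \bigr\rvert$ with $\tfrac{a_{\max,i}-b_i}{2} \in \{0,\pm1\}$ not all zero, so by $4(\nu+\epsilon)$-separation (Definition~\ref{def:wellsep}) this exceeds $8(\nu+\epsilon)$. For (b) and (c): $\mathbf{a}_{\max}$ is the sign pattern of $\mathbf{c}$, $M = \sum_i \lvert c_i \rvert$, and $M - f_{\mathcal{I}}(\mathbf{b}) = 2 \sum_{i \in T} \lvert c_i \rvert \ge 2 \min_i \lvert c_i \rvert > 8(\epsilon+\nu)$, where $T = \{ i : b_i \ne a_{\max,i}\} \ne \emptyset$. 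Since any collected sample is within $4(\epsilon+\nu)$ of $M$, no sample from a non-$\mathbf{a}_{\max}$ pattern is collected, so $\mathbf{X}_{\max}$ is ``pure''.

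Given purity, the remaining combinatorial analysis is verbatim that of Lemma~\ref{Lem:unique}: among $m_1 = 2 n 2^s$ samples, $\mathbf{a}_{\max}$ occurs with probability $\ge 2^{-s}$, so with probability $1 - O(1/n)$ at least $2n$ of them realize it and $\mathbf{X}_{\max}$ has at least $2n$ rows; conditioned on that, $q(\mathbf{X}_{\max})$ has rank at least $n - s$ with probability $1 - O(1/n)$ by the full-rank estimate~(\ref{Eqn:int1}); hence each of the two binary linear systems in Algorithm~\ref{alg:learnNoisy} has at most $2^s$ solutions, giving $\lvert \mathcal{S} \rvert \le 2^{s+1}$ and $\mathcal{I} \subseteq \mathcal{S}$, with all Gaussian-elimination and enumeration steps running in $\mathrm{poly}(n, 2^s)$.

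Finally I would analyze the convex program~(\ref{Eqn:convexprogramNoisy}) through Theorem~\ref{Thm:compnoise}, applied with the column family $\mathcal{S}$, the true set $\mathcal{I}$, noise level ``$\epsilon + \nu$'' in place of $\epsilon$, and $m = 4096 n s^2$ fresh samples. The point $\mathbf{c}_{\mathcal{I}}$ is feasible, because $\mathbf{A}\mathbf{c}_{\mathcal{I}} - \mathbf{y} = -\mathbf{A}\mathbf{c}_{\mathcal{I}^c} - \varepsilon$ and, since every entry of $\mathbf{A}$ is $\pm 1$, $\sqrt{1/m}\,\lVert \mathbf{A}\mathbf{c}_{\mathcal{I}^c} \rVert_2 \le \lVert \mathbf{c}_{\mathcal{I}^c} \rVert_1 < \nu$ while $\sqrt{1/m}\,\lVert \varepsilon \rVert_2 \le \epsilon$. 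Because $\mathbf{c}_{\mathcal{I}}$ is supported inside $\mathcal{I} \subseteq \mathcal{S}$, its restriction to $\mathcal{I}^c \cap \mathcal{S}$ vanishes, so Theorem~\ref{Thm:compnoise} yields $\lVert \mathbf{v}^{\mathrm{opt}} - \mathbf{c}_{\mathcal{I}} \rVert_2 \le c_1(\epsilon + \nu) = 4(\epsilon+\nu)$ with probability $1 - O(1/4^n)$. Then $\lVert \mathbf{v}^{\mathrm{opt}} - \mathbf{c} \rVert_2 \le \lVert \mathbf{v}^{\mathrm{opt}} - \mathbf{c}_{\mathcal{I}} \rVert_2 + \lVert \mathbf{c}_{\mathcal{I}^c} \rVert_2 \le 4(\epsilon+\nu) + \nu$ gives the bound with $\alpha_1 = 4$, $\alpha_2 = 5$; a union bound over the three failure events keeps the overall error probability at $O(1/n)$, and the total running time is $\mathrm{poly}(n, 2^s)$ since $\lvert \mathcal{S} \rvert = O(2^s)$ and the program runs in $\mathrm{poly}(m, \lvert \mathcal{S} \rvert)$.
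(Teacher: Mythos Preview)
Your proposal is correct and follows essentially the same route as the paper: establish that MaxCluster is ``pure'' (the paper packages this as Lemma~\ref{lem:cluster}), then reuse the analysis of Lemma~\ref{Lem:unique} verbatim to get $\mathcal{I}\subseteq\mathcal{S}$ with $|\mathcal{S}|\le 2^{s+1}$, and finish via Theorem~\ref{Thm:compnoise}. The only noteworthy deviation is in the last step: you take $\mathbf{c}_{\mathcal{I}}$ as the feasible point in Theorem~\ref{Thm:compnoise}, which makes $\lVert\beta_{\mathcal{I}^c\cap\mathcal{S}}\rVert_1$ vanish and yields $\lVert\mathbf{v}^{\mathrm{opt}}-\mathbf{c}_{\mathcal{I}}\rVert_2\le c_1(\epsilon+\nu)$ directly, whereas the paper plugs in $\mathbf{c}_{\mathcal{S}}$ and carries the residual term $c_2(n/m)^{1/4}\lVert\mathbf{c}_{\mathcal{I}^c\cap\mathcal{S}}\rVert_1\le c_2\nu$; your variant is marginally cleaner and gives slightly smaller constants, but the argument is otherwise identical. (A tiny cosmetic point: your triangle inequality for collected samples already gives $|f_{\mathcal{I}}(\mathbf{x}_i)-M|\le 4(\epsilon+\nu)$, so a gap $M-v>4(\epsilon+\nu)$ would have sufficed; you prove $>8(\epsilon+\nu)$, which is of course fine.)
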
  
  
\subsection{Proof of Theorem \ref{Thm:Noisy}}

Although the observations are noisy as in the noise model given by (\ref{eq:noisy}), the set of inputs for which the sparse representation of the function $f$, i.e., $f_\mathcal{I}$ (this depends on only Fourier coefficients in ${\cal I}$ ) attains its maximum, can still be perfectly identified under certain conditions given in the Lemma below. Algorithm \ref{alg:cluster} identifies those inputs.

\begin{lemma}\label{lem:cluster}
If the function $f$ is approximately $(s,\nu)$-sparse, observations follow the noise model in (\ref{eq:noisy}), and if the values of $f_{{\cal I}}$ are separated by at least $4(\epsilon+\nu)$, then the output matrix $\mathbf{X}_{\mathrm{max}}$ in Algorithm \ref{alg:cluster} will contain exactly those inputs for which $f_{{\cal I}}$ attains the maximum value among the drawn samples. 
\end{lemma}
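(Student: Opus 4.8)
The plan is to reduce the statement to elementary interval arithmetic: show that every observed label $y_i=f(\mathbf{x}_i)+\varepsilon_i$ lies within $\epsilon+\nu$ of the ``true'' value $f_{\mathcal{I}}(\mathbf{x}_i)$ of the sparse part, and then use the assumed $4(\epsilon+\nu)$-separation of the values of $f_{\mathcal{I}}$ to conclude that the radius-$2(\epsilon+\nu)$ window around the observed maximum $\eta$ isolates exactly the samples on which $f_{\mathcal{I}}$ is maximal.

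First I would record the uniform deviation bound. Since $f-f_{\mathcal{I}}=\sum_{S\in\mathcal{I}^c}c_S\chi_S$ and $|\chi_S(\mathbf{x})|=1$, approximate $(s,\nu)$-sparsity (Definition~\ref{def:approx}) gives $|f(\mathbf{x})-f_{\mathcal{I}}(\mathbf{x})|\le\sum_{S\in\mathcal{I}^c}|c_S|<\nu$ for every $\mathbf{x}$; combined with $|\varepsilon_i|\le\epsilon$ this yields $|y_i-f_{\mathcal{I}}(\mathbf{x}_i)|<\epsilon+\nu$ for all $i$ (the strictness, inherited from Definition~\ref{def:approx}, is what will make the window boundaries safe). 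Let $M$ be the largest value of $f_{\mathcal{I}}$ over the drawn samples, attained at some index $i^\star$; then $\eta=\max_i y_i\ge y_{i^\star}>M-(\epsilon+\nu)$, and also $\eta<M+(\epsilon+\nu)$ since each $y_i<f_{\mathcal{I}}(\mathbf{x}_i)+(\epsilon+\nu)\le M+(\epsilon+\nu)$. Hence $|\eta-M|<\epsilon+\nu$.

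Then I would verify the two inclusions that together give ``exactly''. If $f_{\mathcal{I}}(\mathbf{x}_i)=M$, the triangle inequality gives $|y_i-\eta|\le|y_i-M|+|M-\eta|<2(\epsilon+\nu)$, so $\mathbf{x}_i$ is kept by Algorithm~\ref{alg:cluster}. Conversely, if $f_{\mathcal{I}}(\mathbf{x}_i)\ne M$ then $f_{\mathcal{I}}(\mathbf{x}_i)\le M$ forces, by the hypothesis that distinct values of $f_{\mathcal{I}}$ are $4(\epsilon+\nu)$-separated, $f_{\mathcal{I}}(\mathbf{x}_i)\le M-4(\epsilon+\nu)$; hence $y_i<M-3(\epsilon+\nu)<\eta-2(\epsilon+\nu)$, so $\mathbf{x}_i$ is discarded. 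Combining the two cases, $\mathbf{X}_{\mathrm{max}}$ contains precisely the sampled inputs on which $f_{\mathcal{I}}$ attains its sample-maximum $M$.

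The argument is routine; the only thing needing care is that the constants line up---the clustering radius $2(\epsilon+\nu)$ must exceed the half-spread $\epsilon+\nu$ of a single value-cluster plus the $\epsilon+\nu$ uncertainty in locating $\eta$, yet stay below the inter-value gap $4(\epsilon+\nu)$ minus that same spread---which is exactly where the $4(\epsilon+\nu)$-separation hypothesis is consumed, and where I would lean on the strict inequality in Definition~\ref{def:approx} to rule out boundary ties. I would also flag two minor points: $f_{\mathcal{I}}$ takes at most $2^s$ distinct values, so ``the maximum among the drawn samples'' is well defined, and this lemma is purely deterministic---whether that sampled maximum coincides with the \emph{global} maximum of $f_{\mathcal{I}}$ (needed downstream so that the parities recovered from $q(\mathbf{X}_{\mathrm{max}})$ in Algorithm~\ref{alg:learnNoisy} are the right ones) is a separate high-probability event, handled just as the event $E_1$ was in the proof of Lemma~\ref{Lem:unique}.
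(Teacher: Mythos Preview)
Your proof is correct and follows the same approach as the paper's: both start from the deviation bound $|y_i-f_{\mathcal{I}}(\mathbf{x}_i)|\le\epsilon+\nu$ and then invoke the $4(\epsilon+\nu)$-separation to argue that the radius-$2(\epsilon+\nu)$ window around $\eta$ isolates exactly the maximum cluster. The paper's proof is a two-sentence sketch, whereas you spell out both inclusions and the localization $|\eta-M|<\epsilon+\nu$ explicitly; your observation about the strict inequality in Definition~\ref{def:approx} handling boundary ties is a refinement the paper does not mention.
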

\begin{proof}
Consider a sample $\mathbf{x}$. Clearly, from the noise model and the definition of approximate sparsity, $ \lvert f \left( \mathbf{x}_i+\varepsilon_i \right) -f_{{\cal I}} \left( \mathbf{x}_i \right) \rvert \leq \nu+\epsilon$. Hence, when using a radius of $2(\nu+\epsilon)$ for clustering, clearly no two samples with different $f_{{\cal I}}$ will be included in $\mathbf{X}_{max}$ and definitely one sample belonging to the maximum $f_{{\cal I}}$ among the observed samples will be included.
\end{proof}

\textit{Proof of Theorem \ref{Thm:Noisy}:}

The three properties in the statement of Theorem \ref{Thm:Noisy} imply that $f_{{\cal I}}$ has the unique sign property for the maximum value due to the same arguments in the proof of Theorem \ref{Thm:Perf2}. Further, they also imply that the values of $f_{{\cal I}}$ are separated by $4 \left(\epsilon+\nu \right)$ in each of the cases. By Lemma \ref{lem:cluster}, rows of $\mathbf{X}_{\mathrm{max}}$ contain only the inputs at which $f_{{\cal I}}$ attains its maximum among the observed values. 

Using Lemma \ref{Lem:unique} on $f_{{\cal I}}$, which is exactly $s$-sparse, it can be seen that $\lvert {\cal S} \rvert \leq 2^{s+1}$ and contains all the parity functions in $f_{{\cal I}}$ with probability $1 - O \left( \frac{1}{n} \right)$ as in Algorithm \ref{alg:learn}. This is because $P$ is formed using inputs in $\mathbf{X}_{\mathrm{max}}$ that give the maximum $f_{{\cal I}}$ among the observed samples in an identical fashion as in Algorithm \ref{alg:learn}.
Now, we have the following chain of inequalities:
\begin{align}
\lVert \mathbf{c} - \mathbf{v}^{\mathrm{opt}} \rVert _2  & \leq \lVert \mathbf{c}_{{\cal S}} - \mathbf{\beta}^{\mathrm{opt}}_{{\cal S}} \rVert _2 + \lVert \mathbf{c}_{{\cal S}^c} \rVert_2 \nonumber ~~~~~~~~~~~~ \left(\mathrm{ triangle~ inequality} \right)\\
      & \leq \lVert \mathbf{c}_{{\cal S}} - \mathbf{\beta}^{\mathrm{opt}}_{{\cal S}} \rVert _2 +\lVert \mathbf{c}_{{\cal S}^c} \rVert_1 \nonumber ~~~~~~~~~~ \left(\lVert \cdot \rVert_2 \leq \lVert \cdot \rVert_1 \right) \nonumber \\
      &\overset{a} \leq c_1 (\nu+\epsilon) + c_2 \left(\frac{n}{m}\right)^{1/4} \lVert \mathbf{c}_{{\cal I}^c \bigcap {\cal S}} \rVert_1 + \lVert \mathbf{c}_{{\cal S}^c} \rVert_1 \nonumber \\
      & \leq c_1 (\nu+\epsilon) + c_2 (\nu)+ \nu ~~~~~~~~~~~(n< m) 
\end{align}
  For inequality (a), it is easy to see that $\mathbf{c}_{{\cal S}}$ is a feasible solution to program \ref{Eqn:convexprogramNoisy} and therefore Theorem \ref{Thm:compnoise} can be applied with $\beta_{{\cal S}}=\mathbf{c}_{{\cal S}}$ with noise threshold $\nu+\epsilon$. Further, $\lVert \mathbf{c}_{{\cal I}^c}\rVert_1 < \nu$.

Since $\lvert \mathcal{S} \rvert \leq 2^{s+1}$, the optimization program \ref{Eqn:convexprogramNoisy} runs in time poly $(n,2^s)$.
  
  \subsection{Algorithm LearnGraph}
    We provide the algorithm $\mathrm{ LearnGraph}$ below. Let $k$ be the number of relevant variables, i.e. variables that are part of at least one hyperedge. Note that $k \leq sd$.
    
    \begin{algorithm}
         \KwIn{Number of edges $s$, $m_1= \max{c 2^{k}d\log n, c 2^{2d+1} s^2 (\log n+k)}$ random labeled samples $ \{ \langle f_{c-cut} \left(\mathbf{x}_i \right), \mathbf{x}_i \rangle \}_{i=1}^{m_1}$.}
	Pick samples $\{\mathbf{x}_{i_j}\}_{j=1}^{n_{\mathrm{max}}}$ corresponding to the maximum value of $f_{c-cut}$ observed in all the $m$ samples. Stack all $ \mathbf{x}_{i_j} $ row wise into a matrix $\mathbf{X}_{\mathrm{max}}$ of dimensions $n_{\mathrm{max}} \times n$.\\
	$\mathbf{R}\Leftarrow \mathbf{X}_{\mathrm{max}}^T\mathbf{X}_{\mathrm{max}}$.\\
	Estimate $d$ by $d=\max_{i}|\{j: \mathbf{R}(i,j)=\max(\mathbf{R}(i,:))\}|$\\
	$c_0= \frac{\sum \limits_{i=1}^{m_1} f_{c-cut} \left(\mathbf{x}_i \right)}{m_1}$\\
	Identify the constant Fourier coefficient by rounding $c_0$ to the nearest integer multiple of $\frac{1}{2^d}$, $c_0\Leftarrow \frac{\mathrm{round}(c_02^d)}{2^d}$.   \\
	$f_{c-cut}\Leftarrow f_{c-cut}-c_0$.\\		
	Initialize $\mathcal{S}_i=\emptyset$ $\forall i\in \{1,2,...,n\}$\\
	Stack $x_j$ for all $(i,j)$ s.t. $\mathbf{R}(i,j)=n_{\mathrm{max}}$, into $\mathcal{S}_i$.\\
	For all $M_{k_i}\subseteq \mathcal{S}_i$ s.t. $|M_{k_i}|\leq d$, $|M_{k_i}|$ is even, calculate $c_{\chi_{M_{k_i}}}= 
	\frac{\sum \limits_{i=1}^{m_1} f_{c-cut} \left(\mathbf{x}_i \right) \chi_{M_{k_i}}(\mathbf{x}_i)}{m_1} $ .\\
	Find Fourier coefficients of parities by rounding $\chi_\mathcal{M}$ to the nearest integer multiple of $\frac{1}{2^d}$, $c_{\chi_\mathcal{M}}\Leftarrow \frac{\mathrm{round}(c_{\chi_\mathcal{M}}2^d)}{2^d}$ \\
	Stack all non-zero parity coefficients and parity variables into $\mathbf{c}$ and $\mathcal{M}$, respectively.\\
	\KwOut{$\mathbf{c},\mathcal{M}$.}
\caption{LearnGraph}
 \label{alg:learngraph}
\end{algorithm}
\textbf{Note:} In the above algorithm, $\mathrm{round}(.)$ function rounds a real number to the nearest integer.
  
  \begin{lemma}\label{lem:chernoff}
   (Chernoff's bound)\cite{jukna} Let $X_i, ~1 \leq i \leq n$ be i.i.d random variables taking values in $[b,c]$. Let $X=\sum \limits_{i=1}^n X_i$. Let $\mathbb{E}[X]=\mu$. Then, $\mathrm{Pr} \left( \sum \limits_{i} X_i \geq \mu + a \right) \leq \exp\left( -\frac{a^2}{2(b-c)^2 n} \right)$ and $\mathrm{Pr} \left( \sum \limits_{i} X_i \leq \mu - a \right) \leq \exp\left( -\frac{a^2}{2(b-c)^2 n} \right)$.
  \end{lemma}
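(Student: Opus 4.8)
The plan is to prove this by the exponential-moment (Chernoff--Cram\'er) method, which in fact yields a bound at least as strong as the one claimed. First I would fix a parameter $t>0$ and apply Markov's inequality to the nonnegative random variable $e^{t(X-\mu)}$, obtaining
\[
\mathrm{Pr}\!\left(\sum\limits_i X_i \ge \mu + a\right) = \mathrm{Pr}\!\left(e^{t(X-\mu)} \ge e^{ta}\right) \le e^{-ta}\,\mathbb{E}\!\left[e^{t(X-\mu)}\right].
\]
Writing $\mu_i=\mathbb{E}[X_i]$ and using that the $X_i$ are independent (all that is really needed beyond the i.i.d.\ hypothesis), the moment generating function factorizes as $\mathbb{E}[e^{t(X-\mu)}]=\prod_{i=1}^n \mathbb{E}[e^{t(X_i-\mu_i)}]$, so everything reduces to bounding the moment generating function of a single centered variable that lives in an interval of length $c-b$.

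The one nontrivial ingredient is Hoeffding's lemma: if $Y$ has $\mathbb{E}[Y]=0$ and takes values in an interval $[\alpha,\beta]$, then $\mathbb{E}[e^{tY}] \le \exp\!\left(t^2(\beta-\alpha)^2/8\right)$. I would prove this by convexity of $y\mapsto e^{ty}$ on $[\alpha,\beta]$: bound $e^{ty}$ by the chord $\frac{\beta-y}{\beta-\alpha}e^{t\alpha}+\frac{y-\alpha}{\beta-\alpha}e^{t\beta}$, take expectations and use $\mathbb{E}[Y]=0$, leaving a function of $t$ whose logarithm $\varphi$ satisfies $\varphi(0)=\varphi'(0)=0$ and $\varphi''(t)\le (\beta-\alpha)^2/4$ (the second derivative is the variance under a tilted two-point measure on $\{\alpha,\beta\}$, hence at most a quarter of the squared range). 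A Taylor expansion with remainder then gives $\varphi(t)\le t^2(\beta-\alpha)^2/8$. Applied to each $Y_i=X_i-\mu_i$, which lies in an interval of length $c-b$, this yields $\mathbb{E}[e^{t(X-\mu)}]\le \exp\!\left(n t^2(c-b)^2/8\right)$, and hence
\[
\mathrm{Pr}\!\left(\sum\limits_i X_i \ge \mu+a\right) \le \exp\!\left(-ta+\frac{n t^2 (c-b)^2}{8}\right).
\]

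Finally I would choose $t$. The exponent is minimized at $t^\star=4a/(n(c-b)^2)$, giving $\exp(-2a^2/(n(c-b)^2))$, which is stronger than, and therefore implies, the stated bound $\exp(-a^2/(2(c-b)^2 n))$; alternatively one may simply take $t=a/(n(c-b)^2)$ and read off an exponent at most $-a^2/(2(c-b)^2 n)$ directly, avoiding any optimization. The lower-tail inequality follows by applying the upper-tail bound to the variables $-X_i$, which take values in $[-c,-b]$ --- still an interval of length $c-b$ --- so that $\mathrm{Pr}\!\left(\sum_i X_i \le \mu-a\right) = \mathrm{Pr}\!\left(\sum_i(-X_i)\ge -\mu+a\right)$ obeys the same estimate. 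The only step requiring genuine work is Hoeffding's lemma; the rest is Markov's inequality, the product form of the MGF, and a one-line choice of $t$. This is precisely why the paper is content to cite \cite{jukna} for the statement; the sketch above is a self-contained derivation.
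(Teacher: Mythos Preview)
Your derivation is correct---this is the standard Hoeffding argument via the Chernoff--Cram\'er method, and as you note it actually delivers the sharper constant $\exp(-2a^2/(n(c-b)^2))$. The paper itself offers no proof of this lemma: it is stated with a citation to \cite{jukna} and used as a black box in the proof of Theorem~\ref{Thm:learngraphproof}, so there is nothing to compare against. Your self-contained sketch is exactly what one would supply if asked to fill in the reference.
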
  
    
    \textit{Proof of Theorem \ref{Thm:learngraphproof}:}
    
     Without loss of generality, let us consider the case when a hyperedge involves more than two vertices. Let us consider a variable to be relevant only if it is involved in at least one hyperedge with more than one vertex. Note that the number of relevant variables is $k \leq sd$. The c$-$cut function counts a hyperedge if either all its nodes are assigned $+1$ or when all its nodes are assigned $-1$. When the c$-$cut function attains its maximum values, every hyperedge is counted.  Clearly, when all the relevant variables are assigned the same value from $\{+1,-1\}$, then c$-$cut attains its maximum value. This happens with probability $1/2^{k-1}$. Let $n_1$ denote the number of samples where all relevant variables are assigned the same sign. Therefore, out of $m_1$ samples taken, 
      \begin{align}
         \mathrm{Pr}\left( n_1 \geq c d \log n\right) & \geq 1- \left(1- \frac{1}{2^{k-1}}\right)^{m_1- c d \log n} \nonumber \\
                     \hfill                                            & \geq 1- \exp\left( - 2 \left(1-\frac{1}{2^{k}} \right) c d \log n \right)  \nonumber \\
                      \hfill                                            & \geq 1 - O \left( \frac{1}{n^{cd}}\right)                  
      \end{align}
      
     Therefore, $n_{\mathrm{max}} \geq cd \log n$ with very high probability. Let $E_1$ denote the event $n_{\mathrm{max}} \geq cd \log n$. Suppose $E_1$ is true, then any two variables that belong to the same hyperedge will have identical columns in $\mathbf{X}_{max}$. Therefore, if $i$ and $j$ are in the same hyperedge , then $R(i,j)=n_{\mathrm{max}}$. Let $\mathbf{\hat{x}}_i$ be the $i$-th column consisting of signs of the $i$-th variable. Let $x_{ik}$ be the $k$-th entry of the $i$th column. Then, $R(i,j) = \mathbf{\hat{x}}^T_i \mathbf{\hat{x}}_{j} = \sum \limits_{k=1}^{n_{\mathrm{max}}} x_{ik}x_{jk}$. $Y_k\triangleq x_{ik}x_{jk} \in \{-1,1\}$ are identically distributed independent random variables for $i \neq j$. Let $E_2$ denote the event that $R_{i,j} \leq \frac{cd \log n}{1+\epsilon}, ~\forall j \neq i,~ \forall i~\mathrm{which~is~irrelevant}$. Observe that for an irrelevant variable $i$, $\mathbb{E}[R(i,j)]=0$ for any $j \neq i$. Thus, applying Lemma \ref{lem:chernoff} with $a= \frac{n_{\mathrm{max}}}{(1+\epsilon)}$ for some constant $\epsilon>0$ and $b=-1$ and $c=1$, we have: 
        \begin{align}
            \mathrm{Pr} \left( E_2 \lvert E_1 \right) &= 1- \mathrm{Pr} \left( \exists~ \mathrm{irrelevant~variable~}i, ~j \neq i: R(i,j) >a \lvert E_1 \right) \nonumber \\
             \hfill                                                      &\geq 1- n^2 \exp \left(-\frac{ n_{\mathrm{max}}}{8(1+\epsilon)^2} \right) ~(\mathrm{union~ bound}) \nonumber \\
             \hfill                                                      & \geq 1 - O\left(\frac{1}{n} \right) ~(\mathrm{for~a ~large ~enough ~ constant~}c) 
        \end{align}
     
      Therefore, when both $E_1$ and $E_2$ are true, then for all irrelevant variable $i$, ${\cal S}_i =i$. If $i$ is relevant, then ${\cal S}_i$ also contains variable(s) other than $i$. Now, for every $i$, ${\cal S}_i$ represents variables which participate in some hyperedge along with $i$ if $i$ is relevant, since for all such variable $j$, $R(i,j)=n_{max}$. Then, if $d$ is known, we take all possible $d$ subsets $M_{k_i}$ of ${\cal S}_i$ and correlate the corresponding parity function $M_{k_i}$ with the function values to find the coefficient. Since $m_1=c 2^k d \log n$ samples are available, error can be made less than $1/2^d$, and this gives an exact estimate with high probability when the result is rounded off to the nearest multiple of $1/2^d$. Let $E_3$ be the event that $\forall M_{k_i} \subset {\cal S}_i,~ \forall \mathrm{~relevant~}i: \lvert\sum \limits_{i} f\left(\mathbf{x}_i\right) \chi_{M_{k_i}} \left(\mathbf{x}_i \right)  - m_1 \mathbb{E}[f\left(\mathbf{x}\right) \chi_{M_{k_i}} \left(\mathbf{x}\right) ]  \rvert\leq m_1 \frac{1}{2^d} $. Since, the function takes values between $0$ and $s$ (the number of hyperedges), taking $a=m_1 \frac{1}{2^d}$, $b=-s$ and $c=s$, and applying Lemma \ref{lem:chernoff}, we have:
        \begin{equation}
            \mathrm{Pr} \left( E_3 \lvert E_1, E_2 \right) \geq 1 - 2^{k} \exp \left(  - \frac{m_1}{2^{2d+1}s^2} \right) \nonumber \\
                                       \hfill                                       \geq 1- O\left(\frac{1}{n} \right).
        \end{equation}
    Therefore, $\mathrm{Pr} \left(E_1 \bigcap E_2 \bigcap E_3 \right) \geq 1 - O \left( \frac{1}{n} \right)$ concluding the proof of correctness for the algorithm.  
      
        There are at most $2^k$ parity functions to correlate. Thus the sample complexity of the algorithm is  $m_1=O(2^{k} d \log n +2^{2d+1}s^2 (\log n +k))$. The running time is $O(n^2 d \log n)+O(2^{2k}d \log n + 2^k2^{2d+1}s^2 (\log n +k))$. The first term in the running time is for forming the matrix $\mathbf{R}$. The second term in the running time is for correlation with $m_1$ samples for each of the $2^k$ parity functions.    
        
   \textbf{Remark:} Here, we have analyzed the algorithm in such a way that the first stage of forming $\mathbf{R}$ and thresholding using $n_{\mathrm{max}}$ only seems to tell us the relevant variables involved. In reality, running the algorithm yields $\mathbf{R}$ which after thresholding at $n_{\mathrm{max}}$ can identify distinct connected components and only the sub-structure of connected components has to be identified in the correlation step. Two variables are in the same connected component if they are in the same hyperedge. But our analysis is for the worst case when there is only one connected component. But it is possible to give a better bound in terms of the size of the largest component instead of $k$ (the total number of relevant variables). We do not pursue that in this proof.

\bibliographystyle{IEEEtran}
\bibliography{learnboolbib}

\end{document}